\documentclass[twoside]{article}

\usepackage[accepted]{aistats2026}
% If your paper is accepted, change the options for the package
% aistats2026 as follows:
%
%\usepackage[accepted]{aistats2026}
%
% This option will print headings for the title of your paper and
% headings for the authors names, plus a copyright note at the end of
% the first column of the first page.

% We also include a `preprint' option for non-anonymous preprints. 
% Change the options for the package aistats2026 as follows:
%
%\usepackage[preprint]{aistats2026}
%
% This option will print headings for the title of your paper and
% headings for the authors names, but does not print the copyright and 
% venue note at the end of the first column of the first page.

% If you set papersize explicitly, activate the following three lines:
%\special{papersize = 8.5in, 11in}
%\setlength{\pdfpageheight}{11in}
%\setlength{\pdfpagewidth}{8.5in}

% If you use the natbib package, activate the following three lines:
%\usepackage[round]{natbib}
%\renewcommand{\bibname}{References}
%\renewcommand{\bibsection}{\subsubsection*{\bibname}}

% If you use BibTeX in apalike style, activate the following line:
%\bibliographystyle{apalike}

\usepackage[round,authoryear]{natbib}
\bibliographystyle{plainnat}

\usepackage{mathtools, amsmath, amsfonts, amssymb,amsthm,multicol,derivative,enumitem,graphicx,float,tikz,tikz-qtree,forest,verbatim,bm,booktabs,lipsum}
\usepackage[ruled,vlined,linesnumbered]{algorithm2e}
\usetikzlibrary{fit,bayesnet, positioning, quotes,calc}
\usepackage[makeroom]{cancel}
\usepackage{stfloats}
\usepackage{adjustbox}
\usepackage[hidelinks,breaklinks=true]{hyperref}
\usepackage{xurl}
\usepackage{multirow}
\usepackage{subcaption}  % for subfigures
\usepackage{placeins}    % for \FloatBarrier (keeps floats in-section)
\usepackage{float}       % for [H] exact placement
\usepackage{booktabs}
\usepackage{tabularx}
\usepackage{array}
\newcolumntype{Y}{>{\raggedleft\arraybackslash}X} % right-aligned flexible column

\usepackage[nameinlink,capitalize,noabbrev]{cleveref}

\urlstyle{same}   

% Math formats
\theoremstyle{definition}

\theoremstyle{definition}

\theoremstyle{definition}

\theoremstyle{definition}

\theoremstyle{definition}

\theoremstyle{definition}

\theoremstyle{definition}

\theoremstyle{definition}

% color
\definecolor{lightblue}{RGB}{173,216,230} 
\definecolor{lightgray}{gray}{.97}

% Math Command
\let\oldforall\forall
\renewcommand{\forall}{\oldforall \, }
\let\oldLeftrightarrow\Leftrightarrow
\renewcommand{\Leftrightarrow}{\oldLeftrightarrow \,\, }
\let\oldexist\exists
\renewcommand{\exists}{\oldexist \: }

%\newcommand{\and}{\mathrm{\,\,and\,\,}}

% Math functions
\newcommand{\KL}{D_{\mathrm{KL}}}

\newcommand{\indep}{\perp \!\!\! \perp}

% stats & probability

 % var
 % var
 % Fisher info
 % Lagrangian & ELBO (curly L)
\newcommand{\E}{\mathbb{E}}

% Bold-face symbols & curly symbols

\newcommand{\bfx}{\mathbf{x}}

\newcommand{\bfz}{\mathbf{z}}

\newcommand{\bfy}{\mathbf{y}}

\newcommand{\bfv}{\mathbf{v}}

\newcommand{\bfa}{\mathbf{a}}
 %bold face 1, indicator counting function

\newcommand{\cD}{\mathcal{D}}

\newcommand{\cT}{\mathcal{T}}

% Math terms

% Graph Theory

 % cliques

\usepackage[most]{tcolorbox}   % coloured / boxed content
\usepackage{xcolor}            % RGB / HTML colour definitions

% ----------- custom colours -----
\definecolor{bubblefill}{HTML}{FAFAD2} % light‑blue background
\definecolor{bubbleframe}{HTML}{FFB84D} % (optional) darker frame

% ----------- new environment ----
% Usage: \begin{contribbox}{<Optional title>} ... \end{contribbox}
\newtcolorbox{contribbox}[2][]{%
	colback=bubblefill,
	colframe=bubbleframe,
	coltitle=black,
	boxrule=0pt,
	arc=2mm,
	left=6pt,right=6pt,top=6pt,bottom=6pt,
	fonttitle=\bfseries,
	title={#2},
	#1
}

\usepackage{xcolor}
\definecolor{darkblue}{HTML}{B8860B}
\usepackage{pifont}     % in the preamble
% helper macro: \cnum{<1–10>}

\definecolor{lightblue}{RGB}{173,216,230} 
\definecolor{lightgray}{gray}{.97}

\newtheorem{proposition}{Proposition}

\newtheorem{theorem}{Theorem}
\newtheorem{remark}{Remark}

\begin{document}

% If your paper is accepted and the title of your paper is very long,
% the style will print as headings an error message. Use the following
% command to supply a shorter title of your paper so that it can be
% used as headings.
%
%\runningtitle{I use this title instead because the last one was very long}

% If your paper is accepted and the number of authors is large, the
% style will print as headings an error message. Use the following
% command to supply a shorter version of the author names so that
% they can be used as headings (for example, use only the surnames)
%
%\runningauthor{Surname 1, Surname 2, Surname 3, ...., Surname n}

\twocolumn[

\aistatstitle{Adversary-Free Counterfactual Prediction via Information-Regularized Representations}

%\aistatsauthor{ Author 1 \And Author 2 \And  Author 3 }

\aistatsauthor{
  Shiqin Tang$^{*,a}$ \And
  Rong Feng$^{*,a,b}$ \And
  Shuxin Zhuang$^{a,b}$ \And
  Youzhi Zhang$^{a}$ \And
  Hongzong Li$^{\dagger,c}$
}

%\aistatsaddress{ Institution 1 \And  Institution 2 \And Institution 3 } ]

\aistatsaddress{
  $^{a}$ CAIR-CAS \And
  $^{b}$ CityUHK \And
  $^{c}$ HKUST
}]

\begin{abstract}
We study counterfactual prediction under assignment bias and propose a mathematically grounded, information-theoretic approach that removes treatment–covariate dependence without adversarial training. Starting from a bound that links the counterfactual–factual risk gap to mutual information, we learn a stochastic representation $Z$ that is predictive of outcomes while minimizing $I(Z;T)$. We derive a tractable variational objective that upper-bounds the information term and couples it with a supervised decoder, yielding a stable, provably motivated training criterion. The framework extends naturally to dynamic settings by applying the information penalty to sequential representations at each decision time. We evaluate the method on controlled numerical simulations and a real-world clinical dataset, comparing against recent state-of-the-art balancing, reweighting, and adversarial baselines. Across metrics of likelihood, counterfactual error, and policy evaluation, our approach performs favorably while avoiding the training instabilities and tuning burden of adversarial schemes.
\end{abstract}

\begingroup
\renewcommand\thefootnote{}
\footnotetext{* Equal contribution. $\dagger$ Correspondence to: Hongzong Li \texttt{\textless lihongzong@ust.hk\textgreater}.}
\endgroup

\section{INTRODUCTION}

Causal effect estimation addresses the ``what if'' (counterfactual) question:
\emph{What outcome would have occurred for a patient, possibly contrary to fact, had the patient received a different treatment?}
In real-world (observational) data, however, who receives a treatment is rarely random. For example, clinicians may preferentially administer a medication to patients with more severe symptoms, while patients with milder illness are less likely to receive it. If we then compare outcomes between the treated (on average sicker) and untreated (on average healthier) groups, the difference reflects both the medication’s effect and the baseline severity gap; as a result, the effect we would predict for mildly ill patients can be biased. This systematic distortion—arising because treatment choice depends on prognostic factors—is often called \emph{assignment bias} (a form of confounding) \citep{rosenbaum1983central, robins1986, robins2000marginal}. 

\textbf{Existing solutions.} A large body of work mitigates such bias via balanced representations \citep{johansson2016,shalit2017}, adversarial domain-invariance \citep{ganin2016domain, kazemi2024adversarially}, and reweighting or orthogonalization \citep{hainmueller2012entropy,huang2007correcting,chernozhukov2018double}. Yet, adversarial approaches often introduce instability and substantial tuning overhead, and balancing via integral probability metrics can become brittle in complex, high-dimensional settings.

\textbf{An information-theoretic lens.}
We offer a unifying perspective that explains why these strategies help. The key idea is to view bias reduction through the amount of information covariates carry about treatment choice. When the features of an individual strongly reveal which treatment they received, treated and untreated groups can look systematically different, and naive comparisons are unreliable. By contrast, when a learned representation removes or attenuates this treatment-revealing signal—while still keeping what matters for predicting outcomes—treated and untreated cases become more comparable, and effect estimates improve. This information-theoretic viewpoint clarifies the goals underlying balancing and domain invariance and provides principled guidance for how to build representations that support counterfactual prediction.

\textbf{Our approach in a nutshell.}
We introduce an information-theoretic representation learning method that explicitly discourages treatment-revealing features without relying on adversarial training. Concretely, we learn a stochastic representation that is predictive of outcomes yet carries little mutual information with treatment assignment. This direct control of the dependence between representation and treatment provides a simple, stable alternative to adversarial objectives, and it can be seamlessly extended to the dynamic case with time-varying covariates. We refer to the static model as the \textbf{S}tatic \textbf{I}nformation-theoretic \textbf{C}ounterfactual \textbf{E}stimator (\emph{SICE}) and its sequential extension as the \textbf{D}ynamic \textbf{I}nformation-theoretic \textbf{C}ounterfactual \textbf{E}stimator (\emph{DICE}).

%\textbf{Our approach in a nutshell.} We introduce an information-theoretic representation learning method that explicitly discourages treatment-revealing features without relying on adversarial training. Concretely, we learn a stochastic representation that is predictive of outcomes yet carries little mutual information with treatment assignment. This direct control of the dependence between representation and treatment provides a simple, stable alternative to adversarial objectives, and it can be seamlessly extended to the dynamic case with time-varying covariates.

\paragraph{Paper structure.}
Section \ref{sec:bg_causal_ml} lays the foundation by formalizing the setting and assumptions, and develops risk–gap bounds that motivate our approach.
Section \ref{sec:sice} and \ref{sec:dice} respectively introduce the proposed \emph{SICE} and \emph{DICE} models. 
Section \ref{sec:exp} presents experiments on controlled simulations and a real clinical dataset, followed by discussion and limitations. 
Supplementary proofs and additional experimental results are deferred to the appendix.
%\hfill{\footnotesize see Sec.~2 for details.}

\begin{contribbox}{}
\textcolor{darkblue}{\textbf{Contributions.}} \textcircled{\scriptsize 1} We present an information-theoretic formulation that unifies balancing, reweighting, and invariance by linking the risk gap to treatment–covariate dependence.
\textcircled{\scriptsize 2} We propose two models -- \emph{SICE} (static) and \emph{DICE} (sequential) -- that avoid adversarial training by directly approximating the mutual-information term.
\textcircled{\scriptsize 3} In both controlled simulations and real data, we vary treatment dimensionality from small to ultra–high and show the proposed models remain competitive as dimensionality grows.
\end{contribbox}

\section{BACKGROUND}
\label{sec:bg_causal_ml}

%Mention that It’s self contained. Assumptions and conditions clearly stated. 
% self contained.

%Causal effect estimation addresses the ``what if'' (counterfactual) question: \emph{what outcome would have occurred for a patient, possibly contrary to fact, had he or she received a different treatment?} By convention, we use $X$, $T$, and $Y$ to denote the covariate or confounder (e.g. the profile of a patient), treatment, and response respectively, and we use $Y(t)$ to denote the potential outcome corresponding to the subject receiving treatment $T=t$. Naturally, if the patient did receive treatment $t$, the potential outcome $Y(t)$ coincides with the observed response $Y$ and $Y(t)$ is named factual outcome; otherwise, $Y(t)$ is unobserved and is referred to as counterfactual outcome.

We first fix notation and explicitly state the assumptions and conditions we rely on, so that the learning problem is pinpointed precisely and the regimes in which our method applies are rigorously defined.

\paragraph{Notations.} We denote covariates (or confounders) by $X$ (e.g., a patient’s profile), treatment by $T$, and outcome by $Y$, with $X \in \mathcal{X}$, $T \in \mathcal{T}$, and $Y \in \mathcal{Y}$.

%, and we write $Y(t)$ for the potential outcome under treatment $T=t$.By consistency, if the patient actually receives treatment $t$, then $Y=Y(t)$ (the \emph{factual} outcome); otherwise, $Y(t)$ is unobserved and is referred to as the \emph{counterfactual} outcome.

\paragraph{Associational and causal estimands.} In causal inference, we are mostly interested in establishing the following models \citep{pearl2009,hernan_robins_whatif}:
%\small
\begin{align}
%\text{Associational Outcome:} 
\quad m_t(x) &:= \E[Y \mid T=t, X=x] \label{eq:obs_mod}\\
%\text{Causal Outcome:} 
\quad g_t(x) &:= \E[Y \mid \mathrm{do}(T=t), X=x] \label{eq:caus_mod}
\end{align}
\normalsize
where $m_t(x)$ refers to the expected values of $Y$ among all subjects with $(X,T)=(x,t)$, and $g_t(x)$ represents a hypothetical average value of $Y$ among all subjects with $X=x$ intervened on treatment $t$ (no matter what treatments they actually receive); we refer to $m_t$ and $g_t$ as associational and causal outcome models, respectively. 
The associational expectation \eqref{eq:obs_mod} cannot be used for decision making because, for instance, if a certain pattern $x$ is always associated with treatment $t$, then the model never observes outcomes under any alternative $t'$ for those $x$. Any prediction at $X=x$ and $T=t'$ is pure extrapolation driven only by parametric/inductive assumptions, not data. Hence plug-in decisions based on those counterfactual predictions can be arbitrarily wrong.
The causal expectation model \eqref{eq:caus_mod}, on the other hand, can be used to facilitate decision-making, but it is difficult to estimate, especially without extra information like instrumental variables. 
Therefore, our optimal strategy here is to consider the scenarios where the two models coincide.% and remove the bias caused by the causal relationship between $X$ and $T$. 
%Counterfactual prediction aims to estimate the Individual Treatment Effect (ITE) defined as 
%\begin{align}
%\E[Y(t) - Y(t')|X=x]. \label{eq:ite}
%\end{align}

\paragraph{Potential outcome framework.} 
We adopt the Neyman–Rubin framework: for $t\in\mathcal{T}$, let $Y(t)$ denote the potential outcome had treatment $t$ been assigned. We assume the standard identification conditions: (i) \emph{Consistency/SUTVA} (the observed outcome under the received treatment equals the corresponding potential outcome, with well-defined interventions); (ii) \emph{Positivity/overlap} ($0<p(T=t\mid X=x)$ on the support of $(X,t)$); and (iii) \emph{Conditional ignorability} ($Y(t)\!\perp\!\!\!\perp\! T \mid X$). 
%Under these conditions,
%We use $Y(t)$ to denote the \textit{potential outcome} of a subject given treatment $t$ regardless of what treatment the subject actually received. Common assumptions made in causal inference include: 
%\begin{itemize}
%\item \textbf{Consistency}: The outcomes in the data corresponds to one of the potential outcomes. Given the treatment $T=t'$, if $t=t'$, then $Y(t)$ is named the \textit{factual outcome}; otherwise, $Y(t)$ is referred to as the %\textit{counterfactual outcome}. Given this assumption, we denote $\E[Y(t)] := \E[Y|\mathrm{do}(T=t)]$. 
%\item \textbf{Positivity}: Each individual has a non-zero probability of receiving either treatment. In other words, $\exists \delta \in (0,1)$ s.t. $\delta \leq p(T=t|X=x) < 1-\delta$, $\forall x\in \cX, t\in \{0,1\}$. 
%\item \textbf{Ignorability}: The potential outcome is independent from the treatment given the covariates being the sufficient adjustment set, i.e. $Y(t)\indep T|X$. 
%\end{itemize}
Under the consistency and ignorability assumptions, the following identity holds \citep{robins1986,pearl2009,hernan_robins_whatif}:
\begin{equation}
\begin{aligned}
\E[Y(t)|X=x] &:= \E[Y|\mathrm{do}(T=t),X=x]\\
&= \E[Y|T=t,X=x].
\end{aligned}\label{eq:ign_ass}
\end{equation}
The \textit{structural causal model (SCM)} in Figure \ref{fig:scms} (a) aligns with the settings in most clinical research as the covariates $X$ causally affect both the treatment $T$ and the outcome $Y$ (e.g. patients who receive transfusions are often clinically sicker). 
If we further assume that $X$ and $T$ are independent, i.e., $p(x|t) = p(x)$, we obtain the setting of a \textit{randomized controlled trial (RCT)}, as illustrated in Figure~\ref{fig:scms} (b). 
RCTs are regarded as the gold standard for causal inference, but they can be very costly to conduct and, in some cases, ethically problematic. 
In the remainder of this section, we discuss how to identify and adjust for the bias introduced by the causal dependence between $X$ and $T$.

\begin{figure}[!tbp]
\begin{center}
\begin{tabular}{ccc}
\includegraphics[clip, width=.13\textwidth]{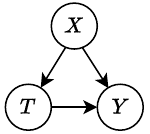} &
\includegraphics[clip, width=.14\textwidth]{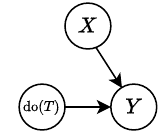}&
\includegraphics[clip, width=.13\textwidth]{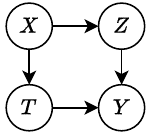} \\
(a) & (b) & (c) 
\end{tabular}
\end{center}
\vspace{-1.5em}
\caption{Structural causal models: (a) an SCM with covariates satisfying the ignorability assumption, 
(b) randomized controlled trial (RCT), 
(c) $Z$ as a learned representation of $X$.}
\label{fig:scms}
\end{figure}

\paragraph{Identifying the bias.} 
Assume consistency, positivity, and ignorability hold, and let $T\sim\pi$ denote the treatment with law $\pi$. Write $p_t(x):=p(x|T=t)$ for the covariate law conditional on $T=t$, and $p_x(x):=\int p_t(x)\pi(dt)$ for the marginal of $X$. 
Fix a predictor $g_t:\mathcal{X}\to\mathcal{Y}$ for arm $t$ and define the (deterministic) loss profile as 
\begin{equation}
\begin{aligned}
\phi_t(x)\;&:=\;\E\!\big[L\big(Y(t),g_t(x)\big)\,\big|\,X=x\big] \\
&= \int L(y,g_t(x)) p(y|x,\mathrm{do}(T=t)) dy.
\end{aligned}\label{eq:loss_prof}
\end{equation}
We define the factual and cross-arm risks for arm $t$ as
\begin{align}
R_t^F := \E_{p(x|t)}[\phi_t(x)],\quad R_{t\rightarrow t'}^{CF} := \E_{p(x|t')}[\phi_t(x)].
\end{align}
We also define the counterfactual (marginal) risk as 
\begin{align}
R_t^{CF} := \E_{\pi(t')}[R_{t\rightarrow t'}^{CF}] = \E_{p_x(x)}[\phi_t(x)],
\end{align}
and the population aggregates $R^F:=\E_{t\sim\pi}[R_t^F]$ and $R^{CF}:=\E_{t\sim\pi}[R_t^{CF}]$. 
Let $\mathcal F$ be a function class and suppose there exists $\lambda>0$ such that
$\phi_t/\lambda\in\mathrm{conv}(\mathcal F)$ for all $t\in\mathcal T$.
Then, for any $t,t' \in \mathcal{T}$,
\begin{equation}
\begin{aligned}
\big|R_{t\to t'}^{CF}-R_t^F\big|
&=\big|\E_{p_{t'}}[\phi_t(x)]-\E_{p_t}[\phi_t(x)]\big|\\
&\le\;\lambda\,\mathrm{IPM}_{\mathcal F}(p_{t'},p_t),
\end{aligned}
\end{equation}
where $\mathrm{IPM}_{\mathcal F}(p,q):=\sup_{f\in\mathcal F}\big|\E_p [f(x)]-\E_q [f(x)]\big|$.
Averaging over $t'\sim\pi$ yields
\begin{align}
\big|R_t^{CF}-R_t^F\big|
\le \lambda\int \mathrm{IPM}_{\mathcal F}(p_{t'},p_t)\,\pi(dt').    
\end{align}
A further average over $t\sim\pi$ gives the population bound \citep{kazemi2024adversarially}
\begin{align}
R^{CF} \le R^F + \lambda\iint \mathrm{IPM}_{\mathcal F}(p_t,p_{t'})\pi(dt)\pi(dt').\label{eq:cf_1}
\end{align}
In particular, if the conditional covariate laws $\{p_t\}_{t\in\mathcal T}$ are close in the $\mathrm{IPM}_{\mathcal F}$ sense, the counterfactual risk cannot exceed the factual risk by much.

\paragraph{Relations with information theory.}
Given the function class $\mathcal{F} = \{f:\|f\|_\infty \leq 1\}$, we have $\mathrm{IPM}_\mathcal{F}(p_t, p_{t'}) = 2\mathrm{TV}(p_t, p_{t'})$. Based on \eqref{eq:cf_1}, we have 
\small
\begin{equation}
\begin{aligned}
R^{CF} &\leq R^F + 2\lambda \iint \mathrm{TV}(p_t, p_{t'}) \pi(dt)\,\pi(dt') \\
&\overset{(a)}{\leq} R^F + 4\lambda \int \mathrm{TV}(p_t, p_x) \pi(dt)\\
&\overset{(b)}{\leq} R^F + 2\sqrt{2}\lambda \int \sqrt{\KL(p_t\| p_x)} \pi(dt)\\
&\overset{(c)}{\leq}  R^F + 2\sqrt{2}\lambda \sqrt{\E_{\pi(t)}[\KL(p(x|t)\|p(x))]}\\ 
&= R^F + 2\sqrt{2}\lambda \sqrt{ I(X;T)},
\end{aligned}\label{eq:cf_it}
\end{equation}
\normalsize
where inequalities (a), (b), and (c) are due to triangular inequality, Pinsker's inequality, and Jensen's inequality. Thus, reducing the mutual information $I(X;T)$ tightens the counterfactual–factual risk gap.

\paragraph{Related works.}
The bound established in \eqref{eq:cf_it} motivates the learning of representations $Z=\phi(X)$ that support factual prediction and bring the treated/control representation distributions closer. 
In the static setting, balanced-representation methods such as CFR/TARNet explicitly penalize a discrepancy (e.g., disc$_{\mathcal H}$ or MMD) between $p(Z\mid T{=}1)$ and $p(Z\mid T{=}0)$ while fitting outcomes, thereby shrinking the bound \eqref{eq:cf_it} \citep{johansson2016,shalit2017}. Beyond these, we highlight two complementary strategies that fit naturally within our bound:
\begin{enumerate}[label=(\alph*)]
\item \textbf{Information-regularized representations.} Learn features $Z=\phi(X)$ that are predictive for $Y$ yet uninformative about $T$ by solving
\small
\begin{equation}
\begin{aligned}
&\min_{\phi}\ \E\!\big[L\big(y, g(\phi(x),t)\big)\big] + \lambda\, I\!\big(T;\phi(X)\big)\\
\propto\,&\min_{\phi}\ \max_{\theta}\ \E\big[L\big(y, g(\phi(x),t)\big)\big] + \lambda \E\!\big[\log p_{\theta}\!(t|\phi(x))\big],
\end{aligned}
\end{equation}
\normalsize
where a variational classifier $p_\theta(t\!\mid\!\phi(x))$ provides a tractable surrogate for $I(T;Z)$. 
%Driving $I(T;Z)$ toward zero collapses $p(Z\!\mid\!T{=}1)$ and $p(Z\!\mid\!T{=}0)$, which in turn reduces any IPM between these distributions and tightens our generalization bound.
\item \textbf{Treatment-invariant joint encodings.} Learn a joint feature $Z=\phi(X,T)$ and enforce invariance across (possibly clustered) treatments by adversarially discouraging a discriminator from predicting the treatment cluster $c$ \citep{berrevoets2020organite}:
%\small
\begin{equation}
\begin{aligned}
&\min_{\phi}\ \E\!\big[L\big(y, (g\!\circ\!\phi)(x,t)\big)\big] + \lambda\, I\!\big(C;\phi(X,T)\big)\\
\propto\,&\min_{\phi}\ \max_{\theta}\ \E\!\big[L\big(y, (g\!\circ\!\phi)(x,t)\big)\big] \\
&\qquad \qquad \qquad + \lambda\, \E\!\big[\log p_{\theta}\!\big(c\mid \phi(x,t)\big)\big].
\end{aligned}
\end{equation}
\normalsize
Pushing $p(\phi(X,T)\!\mid\!C{=}c)$ together across clusters contracts the discrepancy between treated/control representations and likewise tightens the bound. %This adversarial recipe has been effective for high-dimensional or multi-category treatments.
\end{enumerate}
Orthogonal lines include reweighting and orthogonalization—IPTW/stabilized weights, entropy balancing/kernel mean matching, and double/debiased ML—which address assignment bias via importance weighting or Neyman-orthogonal scores rather than representation constraints \citep{rosenbaum1983central,robins2000marginal,hainmueller2012entropy,huang2007correcting,chernozhukov2018double}. In the \emph{dynamic setting} with time-dependent confounding, representation learning is extended along trajectories (e.g., CRN, Causal Transformer) to maintain balance sequentially \citep{bica2020crn,melnychuk2022ct}.

%\paragraph{Unmeasured confounding.}
%Ignorability can fail when a latent variable $U$ causally affects both $T$ and $Y$ as shown in Figure \ref{fig:scms} (c). We defer a full treatment of this case to future work. 

\section{SICE FORMULATION}
\label{sec:sice}
Let $Z$ be a representation of $X$ via a stochastic map $z \sim q_\phi(z|x)$; thus $T \leftarrow X \rightarrow Z$ is a Markov structure.
For any treatment $t$, the induced conditional law of $Z$ is $p(z|t) = \int p(x|t)q_\phi(z|x) dx$, and if $T\sim \pi$ we write the $T$-marginalized representation distribution as $p_z(z) = \int p(z|t)\pi(dt)$. We redefine the loss profile in \eqref{eq:loss_prof} as 
\begin{align}
\phi_t(z) := \E[L(Y(t), g_t(z))|Z=z].
\end{align}
and the risks as 
\begin{align}
R^F = \E_{\pi(t)p(z|t)}[\phi_t(z)],\quad R^{CF} = \E_{\pi(t)p_z(z)}[\phi_t(z)].
\end{align}
By adapting the argument in Sec. \ref{sec:bg_causal_ml} to the representation space and comparing $p(z|t)$ with the marginal $p_z(z)$, we obtain
\begin{align}
R^{CF} \leq R^F + \sqrt{2}\lambda \sqrt{I(Z;T)} \approx \hat R^F+ \sqrt{2}\lambda \sqrt{I(Z;T)},\label{eq:cf_it_2}
\end{align}
where due to consistency
\begin{align}
\hat R^F = \E_{p_\cD(x,y,t)q_\phi(z|x)}[L(y,g_t(z))].
\end{align}
By minimizing the bound \eqref{eq:cf_it_2}, we compress away $T$-predictive idiosyncrasies while retaining outcome-relevant signal via the decoder term, encouraging $Z$ to retain outcome-relevant information while reducing treatment-revealing information.
%By minimizing the bound \eqref{eq:cf_it_2}, we compress away $T$-predictive idiosyncrasies (small dependence $Z \leftrightarrow T$) and retain outcome-relevant signal (via a conditional decoder term),
%force the learned representation $Z$ to be uninformative about $T$ but still predictive on $Y$, 
%aking $Z$ a sufficient statistic of $X$ for predicting $Y$. 
The proposed model is illustrated in Figure \ref{fig:scms} (c) with a Markov chain $X \rightarrow Z \rightarrow Y$. 

\paragraph{Assumptions for SICE.} We retain Consistency/SUTVA. 
Overlap is required on the representation: there exists $\delta\in(0,1)$ such that for all $(z,t)$ with $p_Z(z)>0$, $\delta \leq p(T=t|Z=z) \leq 1-\delta$. The ignorability assumption is updated as $Y(t)\indep T|Z$, for all $t\in \cT$. 
We assume $Z = \phi(X)$ is outcome-sufficient and confounding-sufficient so that $Y(t)\indep T|Z$. We enforce approximate treatment-agnosticity via the MI regularizer to reduce assignment bias, while preserving outcome signal via the decoder loss. 
%We train $\phi$ to encourage treatment-agnosticity by penalizing $I(Z;T)$, but this penalty does not imply ignorability. Identification still relies on the ignorability and positivity assumptions stated above holding at $Z$.

\paragraph{Adjusting for the bias.}
Motivated by Eq.~\eqref{eq:cf_it}, we seek a representation that is maximally predictive for $Y$ while minimally informative about $T$. 
Let $z\sim q_\phi(z\mid x)$ be a stochastic encoder and write expectations over $(x,y,t)\sim p_\mathcal{D}$ and $z\sim q_\phi(\cdot|x)$. We optimize
\begin{align}
\min \quad \E[L(y, g(z,t))] + \lambda I(t; z)\label{eq:sice_info}
\end{align}
Following the approach of \citep{no_dat}, we decompose the mutual information as
\begin{align}
I(z; t) = I(z; t|x) + I(z;x) - I(z;x|t),\label{eq:it1}
\end{align}
where 
$$I(z; t|x) = H(z|x) - H(z|x, t) = 0,$$
where $H(z|x,t) = H(z|x)$ because $p(z|x, t) = p(z|x)$; this can also be inferred by noting the conditional independence between $z$ and $t$ given $x$. We now focus on lower-bounding $I(z;x|t)$,
\begin{equation}
\begin{aligned}
I(z;x|t) &= H(x|t) - H(x|z,t)\\
&= H(x|t) + \E[\log p(x|z,t)]\\
&\geq \E[\log p_\psi(x|z,t)] + \text{Const.} 
\end{aligned}\label{eq:sice_vb_bound}
\end{equation}
Similarly to variational information bottleneck (VIB) \citep{vib}, we can upper-bound $I(x;z)$ as
\begin{align}
I(x;z) \leq \E[\log q_\phi(z|x) - \log r(z)].\label{eq:bound}
\end{align}
Therefore, we have
\begin{align}
I(z;t) \leq \E[\log q_\phi(z|x) - \log r(z)- \log p_\psi(x|z,t)]+c.  
\end{align}
Putting it all together, we have the formulation, 
\begin{equation}
\begin{aligned}
\min_{g,\phi,\psi} \quad &\E_{p_\cD(x,y,t)q_\phi(z|x)}\big[L(y, g(z,t)) - \lambda \log p_\psi(x|z,t)\big] \\
&\qquad \qquad + \lambda \E_{p_\cD(x)}[\KL(q_\phi(z|x)\|r(z))] 
%\min_{g,\phi,\psi} \quad \E_{p_\cD(x,y,t)q_\phi(z|x)}\big[L(y, g(z,t)) + \lambda (\log q_\phi(z|x) - \log r(z) - \log p_\psi(x|z,t))\big] \label{eq:sice}
\end{aligned}
\label{eq:sice}
\end{equation}
The learning problem is cast as a single objective with a scalar information penalty, rather than a min–max game against a domain classifier. 
%In particular, we avoid the instability often associated with adversarial training while still directly discouraging recoverability of $T$ from $Z$. 
Moreover, thanks to the tractable variational bound in \eqref{eq:sice_vb_bound}, SICE naturally accommodates complex treatment spaces—including high-dimensional or continuous $t$ (as is verified in Section \ref{sec:exp})—where enumerating or clustering “domains” becomes impractical and standard DAT methods struggle to scale.
%Through this formulation, we don't have to resort to adversarial training used in other comparable methods. 
%Usually, in order to lower the mutual information $I(z;t)$ or $I(\phi(x);t)$, people either resort to domain adversarial training (DAT) to make $\phi(x)$ domain agnostic, or learn a joint feature $\phi(x,t)$ invariant to $t$ also through DAT. 
%Our method is good in at least two ways: 1. we don't use adversarial training which is known to be notoriously unstable to train, 2. our method is good for very complicated treatment (e.g. $t$ being an high-dimensional vector as in the case of organ transplant), in which case there are too many domains for DAT to work. 
%We refer to the formulation \eqref{eq:sice} as \textbf{S}tatic \textbf{I}nformation-theoretic \textbf{C}ounterfactual \textbf{E}stimator (SICE).

\paragraph{Intervention.} To estimate the individualized treatment effect at a given covariate $x$, we compute the posterior over $z$ given $x$, then plug the treatment intervention into the outcome head while holding $z$ fixed. Mathematically, we have
\begin{equation}
\begin{aligned}
\mathrm{ITE}(x;t,t') &= \E[Y(t) - Y(t')|X=x] \\
&= \E_{q_\phi(z|x)}[g(z,t) - g(z,t')].
\end{aligned}
\end{equation}
This is a valid ``do''-intervention under the representation-sufficiency assumption $Y(t)\indep T|Z$; under this assumption, intervening on $T$ changes the outcome head while leaving the posterior over $z$ given $x$ unchanged.

% approximation error

\section{DICE FORMULATION}
\label{sec:dice}
%In this section, we propose a dynamic causal effect estimation model referred to as \textbf{D}ynamic \textbf{I}nformation-theoretic \textbf{C}ounterfactual \textbf{E}stimator (DICE).

\paragraph{Notations.}
Suppose there are $T$ decision points. 
We use $\bfv \in \mathcal{V}$ to denote the time-invariant covariates, which could contain patient demographics and preoperative assessments. 
Let $\bfx_t \in \mathcal{X}$ denote the time-varying covariates at decision point $t$, such as patients' vital signs. 
We use $\bfa_t \in \mathcal{A}$ to denote the action at time $t$. 
We use $\bfy_{t+1}$ to denote the response at each time point, e.g. the tumor size. 
We define the history up to time $t$ as $h_t = \{\bfv, \bfx_{1:t}, \bfa_{1:t-1}\}$. In practice, we can use RNN hidden states to encode the historical information. A graphical representation of DICE is shown in Figure \ref{fig:dice}. 
In this Section, we use the term ``RNN'' to encompass a spectrum of recurrent architectures such as vanilla RNNs, gated recurrent units (GRUs) \citep{cho2014gru}, and long short-term memory networks (LSTMs)\citep{hochreiter1997lstm}, and the RNN backbone can be easily replaced by a transformer and other structures of the same purpose \citep{causal_transformer}. 

\begin{figure}[!tbp]
\begin{center}
\begin{tabular}{cc}
\includegraphics[clip, trim=.9cm 0cm .7cm 0cm, width=.27\textwidth]{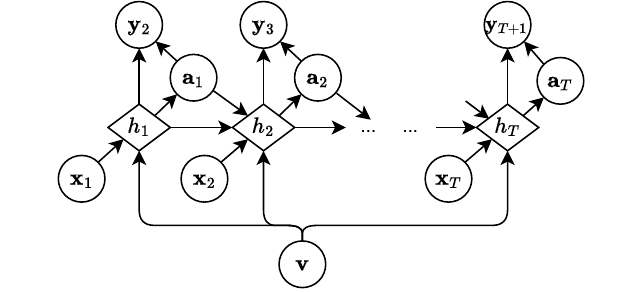}& \includegraphics[clip, trim=.3cm 0cm .3cm 0cm, width=.18\textwidth]{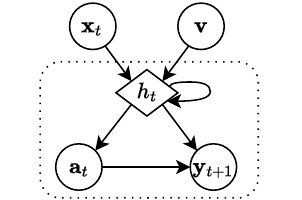}  \\
(a) & (b)
\end{tabular}
\end{center}
\vspace{-1.5em}
\caption{DICE: (a) Graphical representation of DICE, where circular nodes denote observed variables and diamond nodes represent RNN hidden states; (b) simplified illustration of its single recurrent segment.}
\label{fig:dice}
\end{figure}

\paragraph{Assumptions.}
For each $t=1,\dots,T$, we assume (i) \emph{consistency}: if the realized treatment history equals $\bar \bfa_{1:t}$, then $Y_{t+1}=Y_{t+1}(\bar \bfa_{1:t})$;
(ii) \emph{sequential ignorability}: conditional on feature $z_t$, the current treatment is independent from the potential outcome, $Y_{t+1}(a_t)\indep a_t|z_t$; and (iii) \emph{sequential positivity}: for any $h$ with $p(h_t=h)>0$, we have $p(\bfa_t=a| h_t=h)>0$ for every feasible action $a$ under evaluation.

\paragraph{Adjusting for the bias.} 
As shown in Figure \ref{fig:dice} (b), the recurrent segment of DICE is comparable to the SCM of its static counterpart shown in Figure \ref{fig:scms} (a), with $h_t$ taking the role of $x$ in the static case.
The treatment-assignment bias in the dynamic model mostly comes from the association between the history $h_t$ containing time-varying confounders $\bfx_t$ and the current treatment $\bfa_t$. 
To remove such bias, we can learn a stochastic representation $z_t \sim q_\phi(z_t|h_t)$ that is treatment-agnostic and predictive on $y_{t+1}$.
We adopt the following parameterization: 
\small
\begin{equation}
\begin{aligned}
&h_t = d_h(h_{t-1}, \bfv, \bfa_{t-1},\bfx_t), \, \bfz_t \sim q_\phi(\bfz_t|h_t), \, \hat\bfy_{t+1} = g(\bfz_t,\bfa_t),
\end{aligned}\label{eq:para_rnn_1}
\end{equation}
\normalsize
for $t\in [T]$, where $d_h$ and $g$ are neural network predictors, and $q_\phi$ is stochastic embedding function. We establish the objective:
\begin{align}
\min\quad \sum_{t=1}^T \E[L(\bfy_{t+1}, g(\bfz_t, \bfa_t))] + \lambda \sum_{t=1}^T I(\bfz_t;\bfa_t).
\label{eq:dice_obj}
\end{align}
The training objective is given by
\begin{equation}
\begin{aligned}
\min_{d_h, \phi, g, \psi}\quad &\sum_{t=1}^T \E_{p_\cD(\bfx_{t}, \bfa_t, \bfy_{t+1})q_\phi(\bfz_t|h_t)}[L(\bfy_{t+1}, g(\bfz_t, \bfa_t))\\
&\qquad \qquad - \lambda \log p_\psi(h_t|\bfz_t, \bfa_t)]\\
&\qquad \qquad + \lambda \E_{p_\cD(h_t)}[\KL(q_\phi(\bfz_t|h_t)\|r(\bfz_t))].
\end{aligned}
\label{eq:dice_train_obj}
\end{equation}

\begin{table*}[!h]
\centering
\caption{Comparison across treatment dimensions $d_t$ (lower is better). Best is \textbf{bold} $+\star$; second-best is \underline{underlined} $+\square$.}
\label{tab:big_grid}
\setlength{\tabcolsep}{5pt}
\renewcommand{\arraystretch}{1.08}

% ---------- (a) ----------
{\bf (a) ATE Error}\par\vspace{0.2em}
\resizebox{0.94\textwidth}{!}{%
\begin{tabular}{lccccccc}
\toprule
Method & $2$ & $5$ & $10$ & $12$ & $14$ & $16$ & $18$ \\
\midrule
SICE           & \textbf{0.0006}$^{\star}$ & \underline{0.0019}$^{\square}$ & \textbf{0.0001}$^{\star}$ & 0.0141 & \textbf{0.0005}$^{\star}$ & \textbf{0.0012}$^{\star}$ & \textbf{0.0069}$^{\star}$ \\
TARNet         & \underline{0.0054}$^{\square}$ & 0.0022 & 0.0089 & \underline{0.0006}$^{\square}$ & 0.0133 & 0.0210 & 0.0406 \\
CFR-HSIC       & \underline{0.0054}$^{\square}$ & 0.0081 & 0.0068 & 0.0133 & 0.0326 & 0.0131 & 0.0211 \\
DragonNet-GPS  & 0.0099 & 0.0076 & 0.0196 & \textbf{0.0001}$^{\star}$ & 0.0128 & 0.0219 & 0.0179 \\
GRL            & 0.0099 & 0.0043 & \underline{0.0013}$^{\square}$ & 0.0169 & 0.0116 & 0.0149 & \underline{0.0131}$^{\square}$ \\
CFR-Wass       & 0.0086 & \textbf{0.0008}$^{\star}$ & 0.0213 & 0.0080 & \underline{0.0029}$^{\square}$ & \underline{0.0088}$^{\square}$ & 0.0187 \\
\bottomrule
\end{tabular}
}

\vspace{0.6em}

% ---------- (b) ----------
{\bf (b) $\mathrm{RMSE}_y$}\par\vspace{0.2em}
\resizebox{0.94\textwidth}{!}{%
\begin{tabular}{lccccccc}
\toprule
Method & $2$ & $5$ & $10$ & $12$ & $14$ & $16$ & $18$ \\
\midrule
SICE           & \textbf{0.5375}$^{\star}$ & \underline{0.6241}$^{\square}$ & \underline{0.8261}$^{\square}$ & \textbf{0.6534}$^{\star}$ & 0.6855 & \textbf{0.8022}$^{\star}$ & \textbf{0.7361}$^{\star}$ \\
TARNet         & 0.5901 & 0.6297 & 0.8818 & 0.6994 & \textbf{0.6635}$^{\star}$ & 0.8196 & 0.7682 \\
CFR-HSIC       & 0.6047 & 0.6999 & 0.9412 & 0.8373 & 0.7452 & 0.9896 & 0.8906 \\
DragonNet-GPS  & \underline{0.5565}$^{\square}$ & \textbf{0.6215}$^{\star}$ & 0.8658 & \underline{0.6886}$^{\square}$ & 0.6957 & \underline{0.8037}$^{\square}$ & 0.7617 \\
GRL            & 0.5874 & 0.6288 & \textbf{0.7895}$^{\star}$ & 0.6978 & \underline{0.6698}$^{\square}$ & 0.9102 & \underline{0.7396}$^{\square}$ \\
CFR-Wass       & 0.6064 & 0.7119 & 0.9097 & 0.7630 & 0.6916 & 1.0837 & 0.9585 \\
\bottomrule
\end{tabular}
}

\vspace{0.6em}

% ---------- (c) ----------
{\bf (c) PEHE}\par\vspace{0.2em}
\resizebox{0.8\textwidth}{!}{%
\begin{tabular}{lccccccc}
\toprule
Method & $2$ & $5$ & $10$ & $12$ & $14$ & $16$ & $18$ \\
\midrule
SICE           & \underline{0.1635}$^{\square}$ & \textbf{0.2385}$^{\star}$ & 0.5480 & \textbf{0.3769}$^{\star}$ & \underline{0.5042}$^{\square}$ & \textbf{0.5387}$^{\star}$ & \textbf{0.4675}$^{\star}$ \\
TARNet         & \textbf{0.1610}$^{\star}$ & \underline{0.2498}$^{\square}$ & \textbf{0.4811}$^{\star}$ & \underline{0.4112}$^{\square}$ & \textbf{0.4983}$^{\star}$ & 0.5942 & 0.4896 \\
CFR-HSIC       & 0.1738 & 0.3553 & 0.6717 & 0.6039 & 0.5888 & 0.8894 & 0.7399 \\
DragonNet-GPS  & 0.1774 & 0.2682 & 0.4983 & 0.4144 & 0.5178 & \underline{0.5467}$^{\square}$ & \underline{0.4737}$^{\square}$ \\
GRL            & 0.1872 & 0.2577 & \underline{0.4843}$^{\square}$ & 0.4456 & 0.5080 & 0.6132 & 0.4830 \\
CFR-Wass       & 0.1886 & 0.3333 & 0.5285 & 0.4946 & 0.5382 & 0.8298 & 0.8059 \\
\bottomrule
\end{tabular}
}

\end{table*}

\section{EXPERIMENTAL VALIDATION}
\label{sec:exp}
The code for replicating the experiments in this section can be found at \url{https://github.com/chrisfngr/AFCP}.

\subsection{SICE: Static Model}
We evaluate the static counterfactual prediction model \textsc{SICE} against strong baselines across a broad spectrum of treatment dimensionalities, including \emph{ultra-high-dimensional} regimes. To stress-test scalability and realism, we vary $d_t$ from small/medium scales to hundreds of dimensions—settings commonly encountered in personalized medicine (multi-drug or multi-exposure regimens) and recommendation/targeted advertising (large action catalogs and multi-action bundles). On synthetic benchmarks where counterfactuals are available, we report \textbf{ATE Error}, $\mathbf{RMSE}_y$, and \textbf{PEHE} (all lower is better). Baselines include TARNet/CFR with IPM regularization (incl.\ Wasserstein) \cite{shalit2017}, a CFR variant with HSIC independence regularization \cite{Gretton2008HSIC}, DragonNet augmented with a generalized propensity-score head \cite{Shi2019Dragonnet,HiranoImbens2004Biometrika}, and an adversarial learner with a gradient reversal layer (GRL) \cite{ganin2016domain}.All baselines follow the same preprocessing and training budget as \textsc{SICE}.

\subsubsection{Synthetic Data}
Unless otherwise noted, all models are trained for \textbf{50 epochs} using the \textbf{Adam optimizer} (learning rate $5\times10^{-4}$) under a unified preprocessing pipeline and training budget. Unless specified, experiments run on a single \textbf{NVIDIA A100 (80GB)} GPU. Baseline models employ $128$-dimensional representation layers. Across experiments, we report three metrics: \textbf{ATE Error}, $\mathbf{RMSE}_y$, and \textbf{PEHE} (all lower is better).

%\paragraph{SICE on Static Settings.}We first evaluate \textsc{SICE} under both linear and nonlinear confounding. The treatment dimension is varied as $d_t \in \{2,5,10,12,14,16,18\}$. Figures~\ref{fig:cluster} and Table~\ref{tab:big_grid} present the results. 

\begin{figure*}[!t]
\begin{center}
\scalebox{0.98}{
\begin{tabular}{@{}ccc@{}}
\includegraphics[clip,trim=.3cm 0cm .3cm 0cm,width=.31\linewidth]{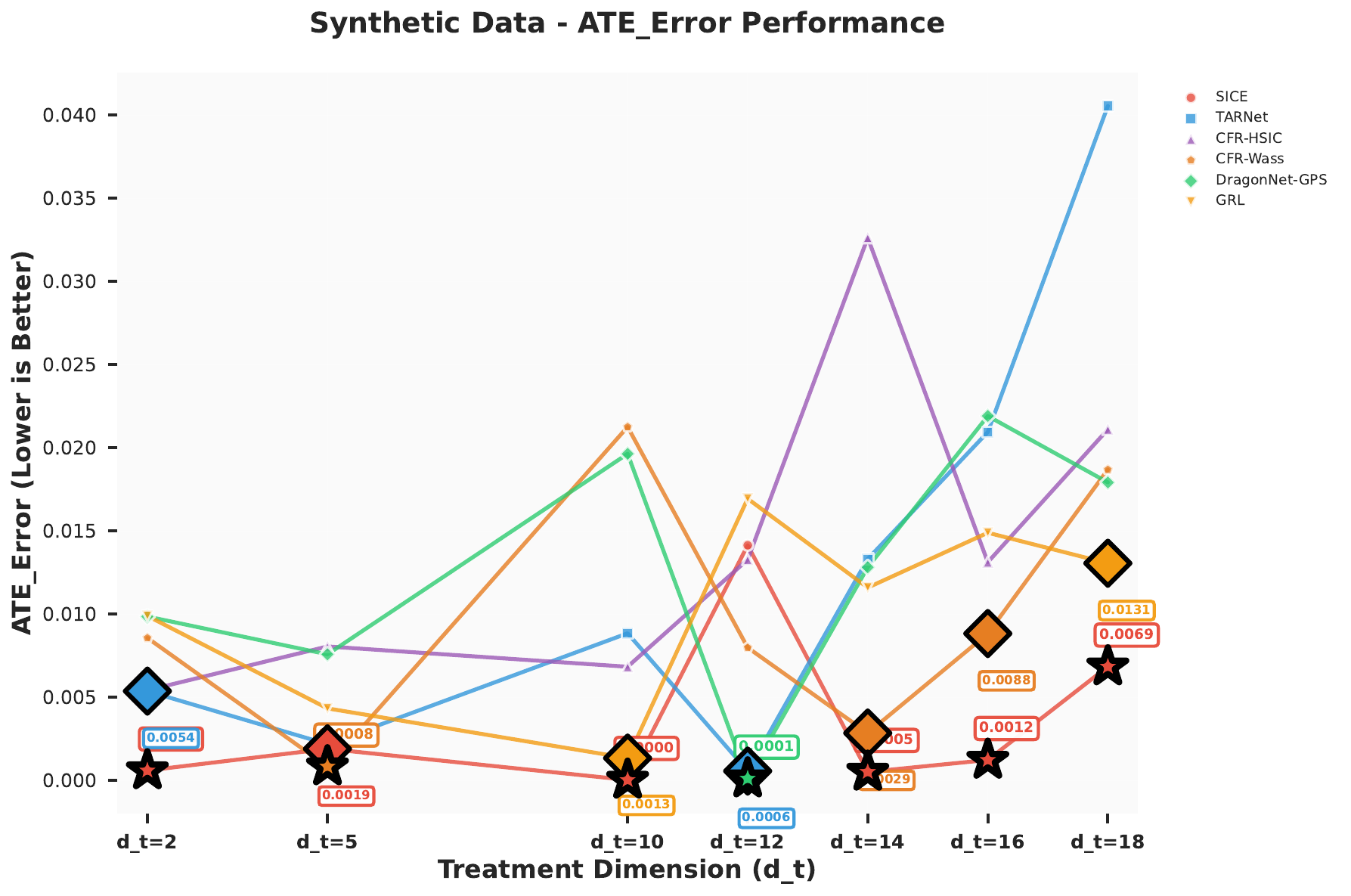}& 
\includegraphics[clip,trim=.3cm 0cm .3cm 0cm,width=.31\linewidth]{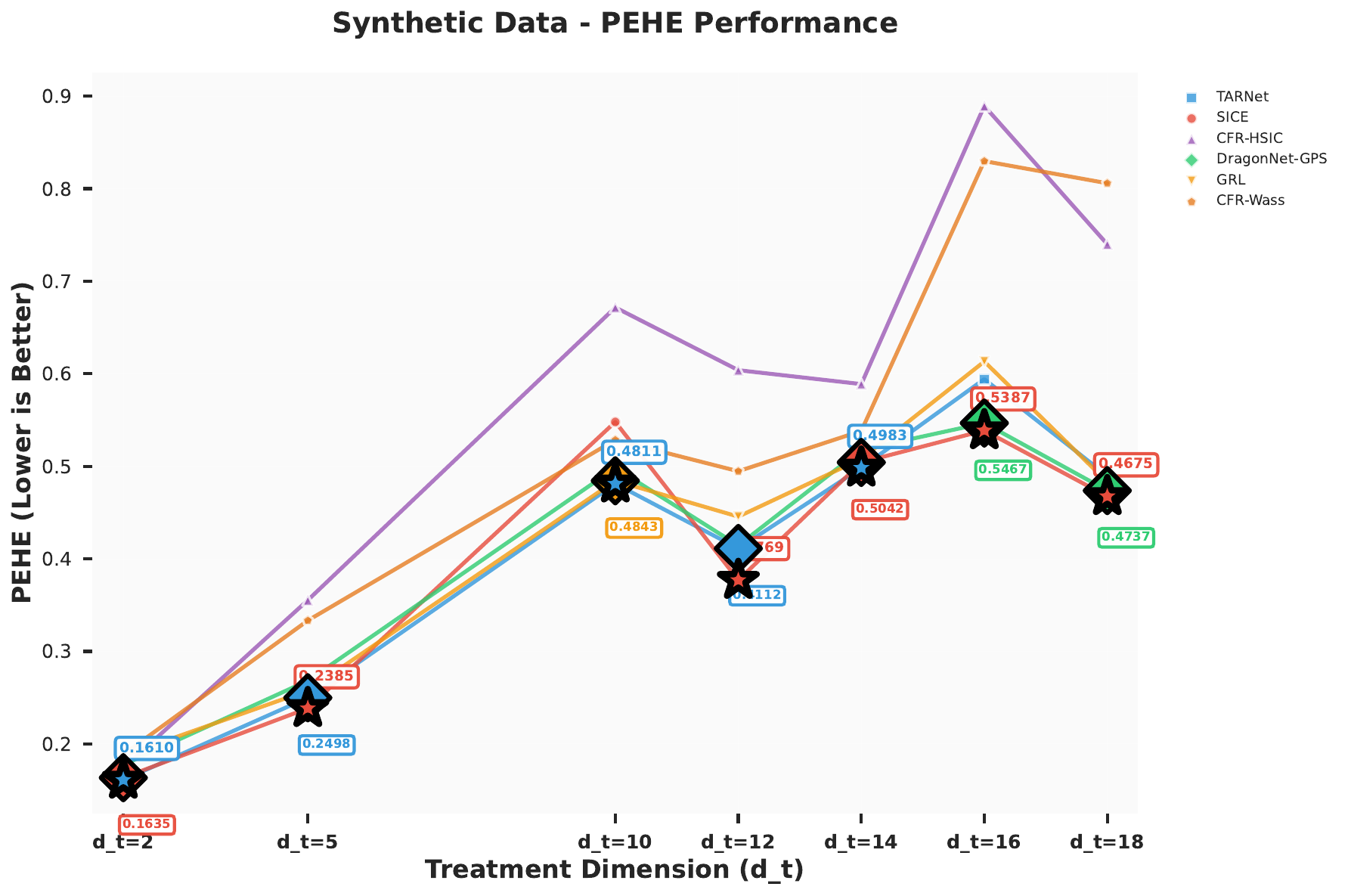}&
\includegraphics[clip,trim=.3cm 0cm .3cm 0cm,width=.31\linewidth]{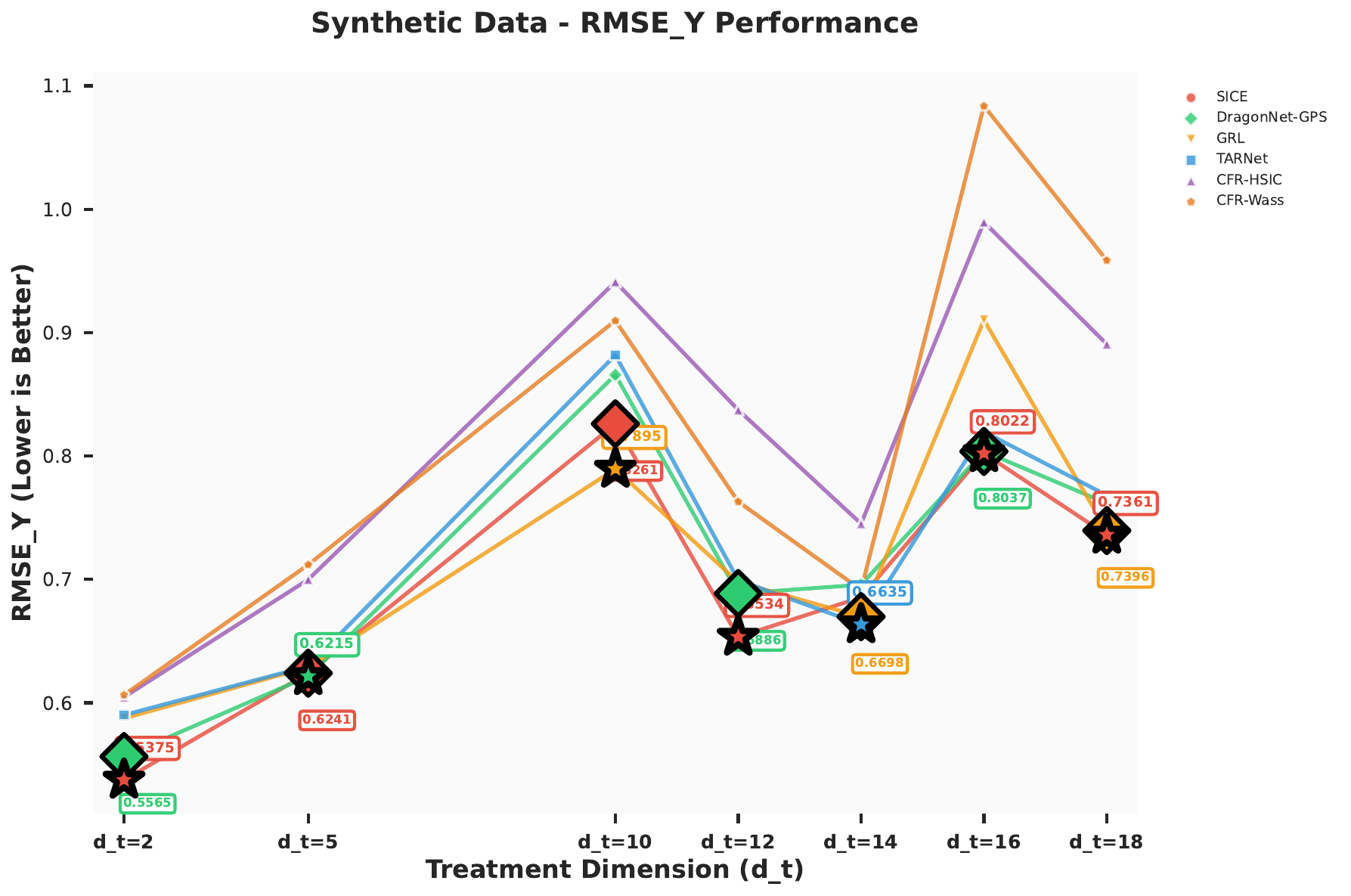}\\
(a) ATE Error & (b) PEHE  & (c) 
$\mathrm{RMSE}_y$ \\
\end{tabular}
}
\end{center}
\setlength{\abovecaptionskip}{0pt}  
\caption{Comparison across treatment dimensions $d_t$. Panels (a)–(c) compare \textsc{SICE} with other methods on ATE Error, PEHE, and $\mathrm{RMSE}_y$ (all lower is better). Star markers denote the best method; square markers denote the second-best.}
\label{fig:cluster}
\end{figure*}

\begin{table}[!t]
\centering
\caption{Comparison of PEHE on the Synthetic Dataset across different treatment dimensions $d_t$ (lower is better).}
\label{tab:high_dim_treatment_nonlinear_pehe}
\setlength{\tabcolsep}{5pt}
\renewcommand{\arraystretch}{1.08}

\textbf{Synthetic Dataset}

\begin{tabular}{lccc}
\toprule
Method & $d_t=200$ & $d_t=500$ & $d_t=1000$ \\
\midrule
SICE            & \textbf{1.18}$^{\star}$ & \textbf{3.44}$^{\star}$ & \textbf{8.34}$^{\star}$ \\
CFR--HSIC       & \underline{1.25}$^{\square}$ & \underline{3.53}$^{\square}$ & \underline{8.52}$^{\square}$ \\
TARNet          & 1.30 & 3.65 & 8.59 \\
GRL             & 1.36 & 3.67 & 8.65 \\
CFR--Wass       & 1.33 & 3.71 & 8.63 \\
DragonNet--GPS  & 1.31 & 3.69 & 8.70 \\
\bottomrule
\end{tabular}
\end{table}

\begin{table*}[!t]
\centering
\caption{NHANES 2017--2018 with multi-dimensional treatments ($T\!\in\!\{0,1\}^{82}$) and covariates ($X\!\in\!\mathbb{R}^{14}$).
We report factual prediction (RMSE/MAE; lower is better) and off-policy targeting (AUUC; higher is better).
Best is \textbf{bold}$+\star$, second-best is \underline{underlined}$+\square$.}
\label{tab:nhanes_multimetrics}
\setlength{\tabcolsep}{6pt}
\renewcommand{\arraystretch}{1.08}

\begin{minipage}[t]{0.48\textwidth}
\centering
\textbf{(a) Factual prediction: RMSE / MAE}\\[0.25em]
\begin{tabular}{lcc}
\toprule
\textbf{Method} & \textbf{RMSE} $\downarrow$ & \textbf{MAE} $\downarrow$ \\
\midrule
SICE (ours)      & \textbf{0.0285}$^{\star}$     & \textbf{0.0227}$^{\star}$ \\
TARNet           & 0.0938                        & 0.0823 \\
CFR--HSIC        & \underline{0.0899}$^{\square}$ & \underline{0.0817}$^{\square}$ \\
DragonNet--GPS   & 0.1332                        & 0.1225 \\
GRL              & 0.1001                        & 0.0902 \\
CFR--Wass        & 0.1008                        & 0.0912 \\
\bottomrule
\end{tabular}
\end{minipage}\hfill
\begin{minipage}[t]{0.48\textwidth}
\centering
\textbf{(b) Off-policy targeting: AUUC}\\[0.25em]
\begin{tabular}{lc}
\toprule
\textbf{Method} & \textbf{AUUC} $\uparrow$ \\
\midrule
SICE (ours)      & \underline{0.2761}$^{\square}$ \\
TARNet           & 0.2383 \\
CFR--HSIC        & 0.2393 \\
DragonNet--GPS   & \textbf{0.3021}$^{\star}$ \\
GRL              & 0.2034 \\
CFR--Wass        & 0.2150 \\
\bottomrule
\end{tabular}
\end{minipage}

\vspace{0.35em}
\parbox{0.96\textwidth}{\footnotesize
\textit{Notes.} NHANES lacks ground-truth counterfactuals, so PEHE is not applicable. 
Arrows indicate the preferred direction for each metric. 
Best/second-best are evaluated across listed methods within each panel.
}

\end{table*}

Across three metrics and seven treatment dimensions, \textsc{SICE} attains \textbf{13/21} best and \textbf{5/21} second-best results (three metrics $\times$ $d_t\in\{2,5,10,12,14,16,18\}$).
Focusing on the higher-dimensional regime $d_t{>}2$ ($d_t\in\{5,10,12,14,16,18\}$), \textsc{SICE} is best in \textbf{11/18} and second-best in \textbf{4/18} metric–dimension pairs (only \textbf{3/18} cases outside the top two).
Breakdown for $d_t{>}2$: on \textbf{ATE Error} \textsc{SICE} is best in \textbf{4/6} (second at $d_t{=}5$); on $\mathbf{RMSE}_y$ it is best in \textbf{3/6} (second at $d_t{=}5,10$); on \textbf{PEHE} it is best in \textbf{4/6} (second at $d_t{=}14$).
The only clear non–top-two outliers are ATE at $d_t{=}12$ and PEHE at $d_t{=}10$; elsewhere \textsc{SICE} is either the best or runner-up.
Overall, performance degrades much more slowly than the baselines as $d_t$ increases and remains competitive through $d_t{=}18$, highlighting robustness in higher-dimensional treatments.

\paragraph{Discussion on High-Dimensional Treatment Experiment}
\label{sec:discussion_high_dim} 
We now evaluate \textsc{SICE} in the setting of \emph{ultra–high-dimensional treatments}, which frequently arise in real-world applications. Examples include personalized medicine (e.g., multi-drug or multi-exposure regimens) and recommendation/targeted advertising (e.g., large action catalogs or multi-action bundles), where the treatment can be a vector with hundreds of components. To stress-test scalability, we report supplementary results on a high-dimensional synthetic benchmark with $d_t\in\{200,500,1000\}$ (Table~\ref{tab:high_dim_treatment_nonlinear_pehe}). \textsc{SICE} attains the lowest PEHE across all three ultra-high-dimensional settings, indicating strong robustness as treatment dimensionality grows.

As shown in the table above, the results are clear: \textbf{SICE consistently outperforms all other baseline methods} across all tested treatment dimensions ($d_t = 200, 500, 1000$). For each dimension, SICE achieves the lowest PEHE (Precision in Estimating Heterogeneous Effects), a crucial metric for evaluating individual treatment effect estimation. This finding strongly supports our hypothesis that SICE is a superior and more robust choice for causal effect estimation in the presence of high-dimensional and non-linear data.

The ability of SICE to maintain its performance advantage as the treatment dimensionality scales up confirms its robustness and adaptability. This makes it a highly effective and reliable solution for causal inference tasks in today's complex, data-rich domains, where traditional methods often struggle.

\subsubsection{Real Data (NHANES 2017--2018)}
We use the \textbf{NHANES 2017--2018} U.S. health survey (CDC/NCHS) \cite{CDC_NHANES_2017_2018}. 
In our setup, the \emph{treatment} is a multi-dimensional vector of medication/exposure indicators with \textbf{$t_{\mathrm{dim}}=82$}; 
\emph{covariates} include core demographics/exam features with \textbf{$x_{\mathrm{dim}}=14$}; 
the \emph{outcome} $Y$ is continuous (e.g., a laboratory biomarker), so squared and absolute errors are meaningful for factual fit. 
Counterfactuals are unobserved; thus we pair factual prediction (RMSE/MAE) with an off-policy targeting metric (AUUC)~\cite{Rzepakowski2012KAIS} that evaluates ranking quality for beneficial interventions.

% ======================= NHANES: Multi-metric Results (table* with subtables) =======================
% Preamble (if not already loaded):
% \usepackage{booktabs,amsmath,subcaption,adjustbox}
% \usepackage[normalem]{ulem} % for \underline (optional)

\paragraph{Interpretation.}
As summarized in Table~\ref{tab:nhanes_multimetrics}, \textbf{SICE} achieves \emph{state-of-the-art factual accuracy} under high-dimensional multi-treatment inputs ($t_{\mathrm{dim}}{=}82$, $x_{\mathrm{dim}}{=}14$), attaining the best RMSE and MAE. On the causal targeting axis, SICE ranks \emph{second} by AUUC, close to the uplift-specialized DragonNet--GPS. This profile---top factual calibration with competitive off-policy ranking---is particularly attractive for clinical decision support, where accurate outcome modeling is primary and reliable targeting quality is an added benefit in high-dimensional treatment settings.

\subsection{DICE: Dynamic Model}
We next assess the \textbf{Dynamic Information-theoretic Counterfactual Estimator (DICE)} in dynamic treatment settings. 
Synthetic data ($N=1000$ samples over $T=10$ time steps) is generated via a configurable \textbf{dynamic simulation} with $\mathbf{X}\in\mathbb{R}^8$, $\mathbf{Y}\in\mathbb{R}^1$, and treatment dimension $d_t$ varied up to $10$. 
All baselines follow the same setup as above. \textsc{DICE} employs a 128-dimensional hidden layer and learns a 32-dimensional latent representation $\mathbf{Z}$, controlled by $\lambda=10^{-5}$. 

\begin{table*}[!t]
\centering
\caption{DICE: Comparison across treatment dimensions $d_t$ (lower is better). Best is \textbf{bold}$+\star$; second-best is \underline{underlined}$+\square$.}
\label{tab:dice}
\setlength{\tabcolsep}{5pt}
\renewcommand{\arraystretch}{1.08}

\textbf{(a) ATE Error}

\begin{tabular}{lccccc}
\toprule
Method & $d_t=2$ & $d_t=4$ & $d_t=6$ & $d_t=8$ & $d_t=10$ \\
\midrule
DICE            & \textbf{0.0187}$^{\star}$ & \textbf{0.0029}$^{\star}$ & \textbf{0.0702}$^{\star}$ & \textbf{0.0951}$^{\star}$ & \textbf{0.0236}$^{\star}$ \\
TARNet          & \underline{0.0550}$^{\square}$ & \underline{0.2469}$^{\square}$ & \underline{0.2496}$^{\square}$ & \underline{0.1722}$^{\square}$ & \underline{1.6088}$^{\square}$ \\
CFR--HSIC       & 0.2363 & 0.1811 & 1.1234 & 2.1101 & 0.7033 \\
DragonNet--GPS  & 0.2381 & 0.2237 & 1.1421 & 2.2098 & 0.7269 \\
GRL             & 0.2385 & 0.4012 & 1.1048 & 2.0916 & 0.7451 \\
\bottomrule
\end{tabular}

\vspace{0.6em}

\textbf{(b) $\mathrm{RMSE}_y$}

\begin{tabular}{lccccc}
\toprule
Method & $d_t=2$ & $d_t=4$ & $d_t=6$ & $d_t=8$ & $d_t=10$ \\
\midrule
DICE            & \textbf{0.2288}$^{\star}$ & \textbf{0.2968}$^{\star}$ & \textbf{0.3538}$^{\star}$ & \textbf{0.3306}$^{\star}$ & \textbf{0.2215}$^{\star}$ \\
TARNet          & \underline{0.4801}$^{\square}$ & \underline{0.6326}$^{\square}$ & \underline{0.8753}$^{\square}$ & \underline{0.5056}$^{\square}$ & \underline{1.0363}$^{\square}$ \\
CFR--HSIC       & 1.8095 & 10.5588 & 7.5341 & 6.4323 & 2.4929 \\
DragonNet--GPS  & 1.8041 & 10.5683 & 7.5353 & 6.4346 & 2.4953 \\
GRL             & 1.8116 & 10.4921 & 7.5832 & 6.3553 & 2.4920 \\
\bottomrule
\end{tabular}

\vspace{0.6em}

\textbf{(c) PEHE}

\begin{tabular}{lccccc}
\toprule
Method & $d_t=2$ & $d_t=4$ & $d_t=6$ & $d_t=8$ & $d_t=10$ \\
\midrule
DICE            & \textbf{0.0293}$^{\star}$ & \textbf{0.1003}$^{\star}$ & \textbf{0.1180}$^{\star}$ & \textbf{0.1548}$^{\star}$ & \textbf{0.0823}$^{\star}$ \\
TARNet          & \underline{0.1161}$^{\square}$ & 0.2624 & \underline{0.3561}$^{\square}$ & \underline{0.3030}$^{\square}$ & 1.6686 \\
CFR--HSIC       & 0.2364 & \underline{0.2081}$^{\square}$ & 1.1264 & 2.1105 & \underline{0.7035}$^{\square}$ \\
DragonNet--GPS  & 0.2383 & 0.2384 & 1.1476 & 2.2105 & 0.7271 \\
GRL             & 0.2386 & 0.4320 & 1.1162 & 2.0928 & 0.7453 \\
\bottomrule
\end{tabular}

\end{table*}

Table~\ref{tab:dice} reports the outcomes. Across all $d_t$, \textsc{DICE} consistently achieves the \textbf{best scores} on ATE Error, $\mathbf{RMSE}_y$, and PEHE, clearly outperforming TARNet, CFR-HSIC, DragonNet-GPS, and GRL. 
Unlike baselines, which deteriorate sharply in higher treatment dimensions, \textsc{DICE} remains stable and accurate, demonstrating superior robustness for estimating ITEs in dynamic high-dimensional settings.

% add a algorithm.
% the benefit of not using an adversary (better be a theory)
% theorem, lemma, with boxes. 

% Overlap & balance diagnostics in Z: report I(Z;T), AUC of a T-classifier on Z, and standardized mean differences across arms. 
%Ablations: (i) no MI penalty, (ii) adversarial variant, (iii) different r(z), (iv) replacing the x-reconstruction term with a contrastive one—show robustness.
%Sensitivity plots of ITE MAE vs. λ; and for DICE, step-wise MI and outcome loss trajectories.

% checklist:
%   how to intervine? what does the new SCM look like? 
%   assumptions for each case. 
%   approximation error to make it more precise
%   use MMD instead in the sequential case? 
%   theoretical insight: attach paper, show which better.
%   formulation interpretation,neater form
%   Generalization bound.
%   intuitive explanation on distribution shift. sick patients always assigned treatments. 
%   t the more complex the better. their methods won't even work. 
%   if h_t reconstruction doesn't work. we can reconstruct x_t as a compromise.
% with complex t, the other methods won't even work. 
% a comparison to VIB. 

\section{CONCLUSION}
We presented an adversary-free framework for counterfactual prediction grounded in an information-theoretic risk bound that links the counterfactual–factual gap to mutual information. Building on this insight, we introduced SICE (static) and DICE (sequential), which learn stochastic representations that remain predictive for outcomes while directly penalizing treatment–representation dependence via a tractable variational surrogate, avoiding the instability of min–max training. Empirically, the approach is competitive across synthetic settings and a real clinical dataset, and it remains robust as treatment dimensionality grows, supporting complex, multi-attribute interventions.
%The main assumptions are standard consistency, overlap on the learned representation, and ignorability given $Z$; violations (e.g., unmeasured confounding) are outside our current scope. 
Future work includes sensitivity-aware training and proximal/IV-style extensions, tighter MI estimators, removal of assignment bias in the case of unmeasured confoundings, and policy-learning objectives that leverage our representation for decision-making under sequential dynamics. \,\,

%\newpage

%\bibliographystyle{plain} 
%\bibliography{refs}
%\newpage

%\newpage

%\bibliographystyle{plain} % or abbrv, unsrt, etc. (depends on AISTATS style)
\bibliography{refs}

% If your paper is accepted and the title of your paper is very long,
% the style will print as headings an error message. Use the following
% command to supply a shorter title of your paper so that it can be
% used as headings.
%
\runningtitle{Adversary-Free Counterfactual Prediction via Info-Regularization}

% If your paper is accepted and the number of authors is large, the
% style will print as headings an error message. Use the following
% command to supply a shorter version of the authors names so that
% they can be used as headings (for example, use only the surnames)
%
%\runningauthor{Surname 1, Surname 2, Surname 3, ...., Surname n}

% Supplementary material: To improve readability, you must use a single-column format for the supplementary material.
\onecolumn
\aistatstitle{Adversary-Free Counterfactual Prediction via Information-Regularized Representations: \\
	Supplementary Materials}

\paragraph{Standing notation.}
$X$ covariates; $T\in\mathcal T$ with law $\pi$; representation $Z\sim q_\phi(\cdot|X)$; outcome $Y$.
Per-arm Z-profile $\varphi_t(z):=\mathbb{E}[L(Y(t),g_t(z))| Z=z]$.
Factual and counterfactual risks
$R_F:=\mathbb{E}_{t\sim\pi}\mathbb{E}_{p(z| t)}[\varphi_t(z)]$ and
$R_{CF}:=\mathbb{E}_{t\sim\pi}\mathbb{E}_{p_Z}[\varphi_t(z)]$.
Total variation $\mathrm{TV}(\cdot,\cdot)$; Kullback–Leibler $ D_{\mathrm{KL}}(\cdot\|\cdot)$;
mutual information $I(Z;T)=\int D_{\mathrm{KL}}(p(z| t)\|p_Z)\,\pi(dt)$.

\section{Main Z-space bound}

\begin{theorem}[Z-space risk gap via MI]
	Assume there exists $\lambda>0$ and $\mathcal F \subset \{f:\|f\|_\infty \le 1\}$
	such that $\phi_t/\lambda \in \mathrm{conv}(\mathcal F)$ for all $t\in\mathcal T$.
	Then
	\[
	|R^{CF}-R^{F}| \le \sqrt{2}\,\lambda\,\sqrt{I(Z;T)}.
	\]
\end{theorem}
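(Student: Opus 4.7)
The plan is to mirror the chain of inequalities already developed in Section~\ref{sec:bg_causal_ml} for $X$-space, now applied in $Z$-space. The argument there uses only the convex-combination representation of $\varphi_t/\lambda$ and measure-theoretic facts about total variation and KL divergence, both of which transfer verbatim once we view $Z$ as a random variable with conditional laws $p(z\mid t)$ and marginal $p_Z$; no causal assumption beyond what has already been imposed on $Z$ is needed for the inequality itself.

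First I would establish the per-arm surrogate bound: writing $\varphi_t/\lambda$ as a convex combination of functions in $\mathcal{F}$ and exchanging the sum with expectation yields $|\mathbb{E}_{p(z\mid t')}[\varphi_t]-\mathbb{E}_{p(z\mid t)}[\varphi_t]|\le \lambda\,\mathrm{IPM}_{\mathcal{F}}(p(z\mid t'),p(z\mid t))$. Since $\mathcal{F}\subset\{f:\|f\|_\infty\le 1\}$, this IPM is dominated by $2\,\mathrm{TV}(p(z\mid t'),p(z\mid t))$ via the dual $L^{1}$--$L^{\infty}$ characterization of total variation. Averaging the per-arm inequality against $t,t'\sim\pi$ then gives $|R_{CF}-R_F|\le 2\lambda \iint \mathrm{TV}(p(z\mid t),p(z\mid t'))\,\pi(dt)\pi(dt')$.

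Next I would insert the marginal $p_Z$ via the triangle inequality, $\mathrm{TV}(p(z\mid t),p(z\mid t'))\le \mathrm{TV}(p(z\mid t),p_Z)+\mathrm{TV}(p_Z,p(z\mid t'))$, whose symmetry in $(t,t')$ collapses the double integral into $4\lambda\int \mathrm{TV}(p(z\mid t),p_Z)\,\pi(dt)$. Pinsker's inequality then upgrades each TV to $\sqrt{\KL(p(z\mid t)\|p_Z)/2}$, yielding $2\sqrt{2}\,\lambda\int \sqrt{\KL(p(z\mid t)\|p_Z)}\,\pi(dt)$. A final Jensen step pulls the square root outside the integral and produces $2\sqrt{2}\,\lambda\sqrt{\mathbb{E}_{\pi}[\KL(p(z\mid t)\|p_Z)]}=2\sqrt{2}\,\lambda\sqrt{I(Z;T)}$ after recognizing the mutual information as the $\pi$-averaged KL divergence.

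The technical care lies in two places. First, the absolute value must be pushed inside \emph{before} averaging over $(t,t')$, since otherwise treatment-level sign cancellations could make $|R_{CF}-R_F|$ smaller than any bound on the signed per-arm gaps can justify. Second, the constant deserves attention: routing TV through $p_Z$ rather than directly between $p(z\mid t)$ and $p(z\mid t')$ costs a factor of two and is what inflates the otherwise tight $\sqrt{2}$ into the stated $2\sqrt{2}$, but this is the price of expressing the bound via the single, clean information quantity $I(Z;T)$ that appears in the SICE/DICE objectives. No other step is delicate: the convex-combination argument is pointwise, and Pinsker and Jensen are applied in their textbook form.
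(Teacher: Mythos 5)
Your proof is correct and follows essentially the same chain as the paper's (IPM $\to$ TV $\to$ triangle inequality through $p_Z$ $\to$ Pinsker $\to$ Jensen), i.e., the $X$-space derivation of \eqref{eq:cf_it} transplanted to $Z$-space, and your bookkeeping of the constant $2\sqrt{2}$ is in fact cleaner than the paper's sketch, whose ``triangle'' step is stated with the inequality pointing the wrong way. One minor observation: because $R_{CF}$ is already an expectation under the marginal $p_Z$, one can bound $|R_{CF}-R_F|\le 2\lambda\int \mathrm{TV}(p(z\mid t),p_Z)\,\pi(dt)$ directly and skip the pairwise detour, which gives the sharper constant $\sqrt{2}\,\lambda\sqrt{I(Z;T)}$ --- so the factor of $2$ you carefully account for is real in your route but avoidable.
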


\begin{proof}
	Write $p_t(\cdot):=p(z\mid T=t)$. Then
	\[
	\begin{aligned}
		|R^{CF}-R^{F}|
		&= \left| \int \big(\mathbb E_{p_Z}[\phi_t]-\mathbb E_{p_t}[\phi_t]\big)\,\pi(dt)\right| \\
		&\le \int \left|\mathbb E_{p_Z}[\phi_t]-\mathbb E_{p_t}[\phi_t]\right|\,\pi(dt) \\
		&\le \lambda \int \mathrm{IPM}_{\mathcal F}(p_t,p_Z)\,\pi(dt) \\
		&\le 2\lambda \int \mathrm{TV}(p_t,p_Z)\,\pi(dt) \\
		&\le \sqrt{2}\,\lambda \int \sqrt{D_{\mathrm{KL}}(p_t\|p_Z)}\,\pi(dt) \\
		&\le \sqrt{2}\,\lambda \sqrt{\int D_{\mathrm{KL}}(p_t\|p_Z)\,\pi(dt)} \\
		&= \sqrt{2}\,\lambda\,\sqrt{I(Z;T)}.
	\end{aligned}
	\]
\end{proof}

\paragraph{Equality/tightness.}
Exact equality when $I(Z;T)=0$ (perfect balance).
Otherwise, the chain of inequalities is tight if simultaneously:
(i) all arm-wise TV maximizers align,
(ii) the divergences $ D_{\mathrm{KL}}(p(z| t)\|p_Z)$ are $\pi$-a.s.\ constant ,
(iii) $ D_{\mathrm{KL}}$ is small so that Pinsker is near-tight.

\section{Low MI limits treatment predictability}

\begin{proposition}[Binary case]
	Assume $T\in\{0,1\}$ and $\pi(0)=\pi(1)=1/2$. Let $e^\star$
	denote the Bayes error of predicting $T$ from $Z$. Then
	\[
	e^\star \ge \frac12 - \sqrt{\frac{I(Z;T)}{2}}.
	\]
\end{proposition}

\begin{proof}
	For equal priors,
	\[
	e^\star=\frac12\bigl(1-\mathrm{TV}(p_0,p_1)\bigr),
	\qquad p_i:=p(z\mid T=i).
	\]
	Also $p_Z=\tfrac12(p_0+p_1)$, so by triangle inequality,
	\[
	\mathrm{TV}(p_0,p_1)\le \mathrm{TV}(p_0,p_Z)+\mathrm{TV}(p_1,p_Z).
	\]
	Applying Pinsker's inequality and then Cauchy--Schwarz gives
	\[
	\mathrm{TV}(p_0,p_1)\le \sqrt{2\,I(Z;T)}.
	\]
	Substituting into the expression for $e^\star$ yields the claim.
\end{proof}

\begin{remark}[Multiclass uniform prior]
	If $T$ is uniform on $K\ge 2$ classes, then Fano's inequality yields
	\[
	e^\star \ge 1-\frac{I(Z;T)+\log 2}{\log K}.
	\]
\end{remark}

\section{Additional Experiments}

% Keep earlier floats out of this section
\FloatBarrier

We study how the mutual–information regularization
$\lambda\in\{10^{-5},10^{-4},10^{-3},10^{-2},0.1,1,10\}$
affects four metrics on a fixed synthetic dataset: test $\mathrm{RMSE}_y$, PEHE, ATE error, and HSIC$(z,t)$.
Let $z=f_\phi(x)$ denote the learned representation of $x$.
Following standard kernel–based independence testing \cite{Gretton2008HSIC}, we use HSIC$(z,t)$ as a dependence proxy between $z$ and the treatment $t$ (smaller indicates weaker $z$–$t$ coupling).

% （可选）仅对本节略微收紧浮动间距，减少空白
\setlength{\textfloatsep}{8pt}
\setlength{\floatsep}{8pt}
\setlength{\intextsep}{8pt}

% =================== Simple stacked figures (single-column) ===================

\begin{figure}[!t]
	\centering
	\includegraphics[width=\columnwidth]{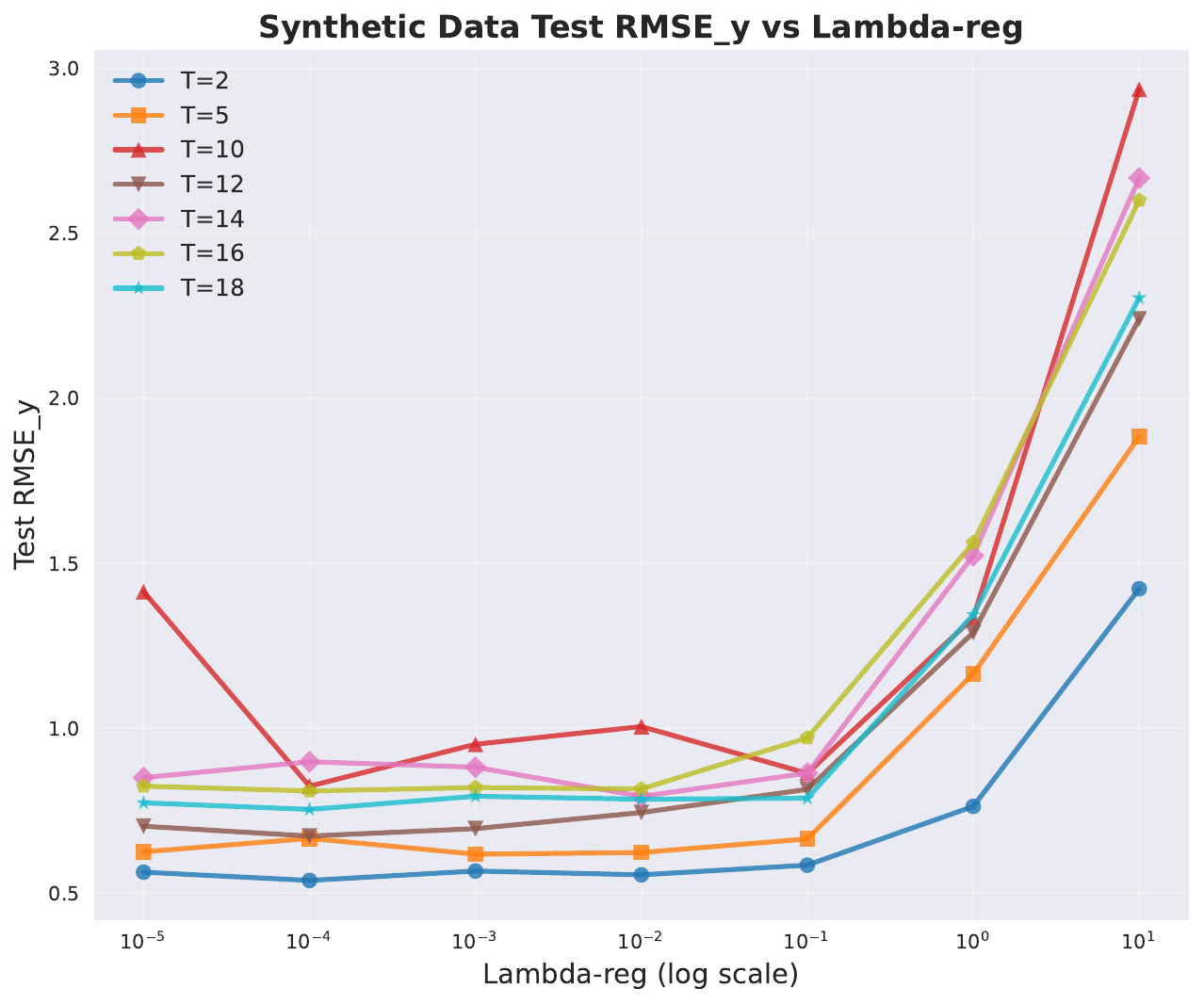}
	\caption{$\mathrm{RMSE}_y$ versus $\lambda\in\{10^{-5},10^{-4},10^{-3},10^{-2},0.1,1,10\}$ on the synthetic dataset (lower is better).}
	\label{fig:supp-rmse}
\end{figure}

\begin{figure}[!t]
	\centering
	\includegraphics[width=\columnwidth]{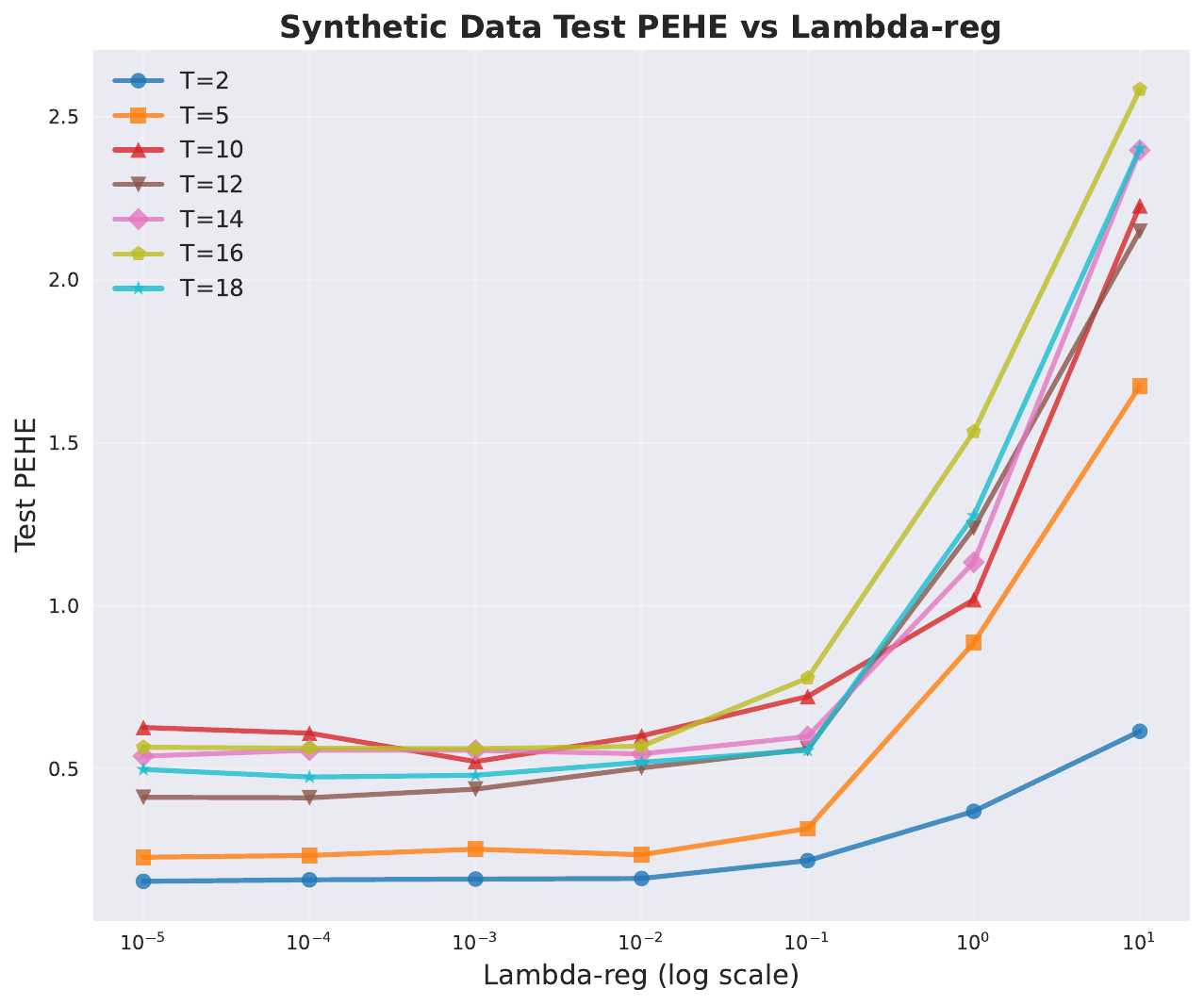}
	\caption{PEHE versus $\lambda$ on the synthetic dataset (lower is better).}
	\label{fig:supp-pehe}
\end{figure}

\begin{figure}[!t]
	\centering
	\includegraphics[width=\columnwidth]{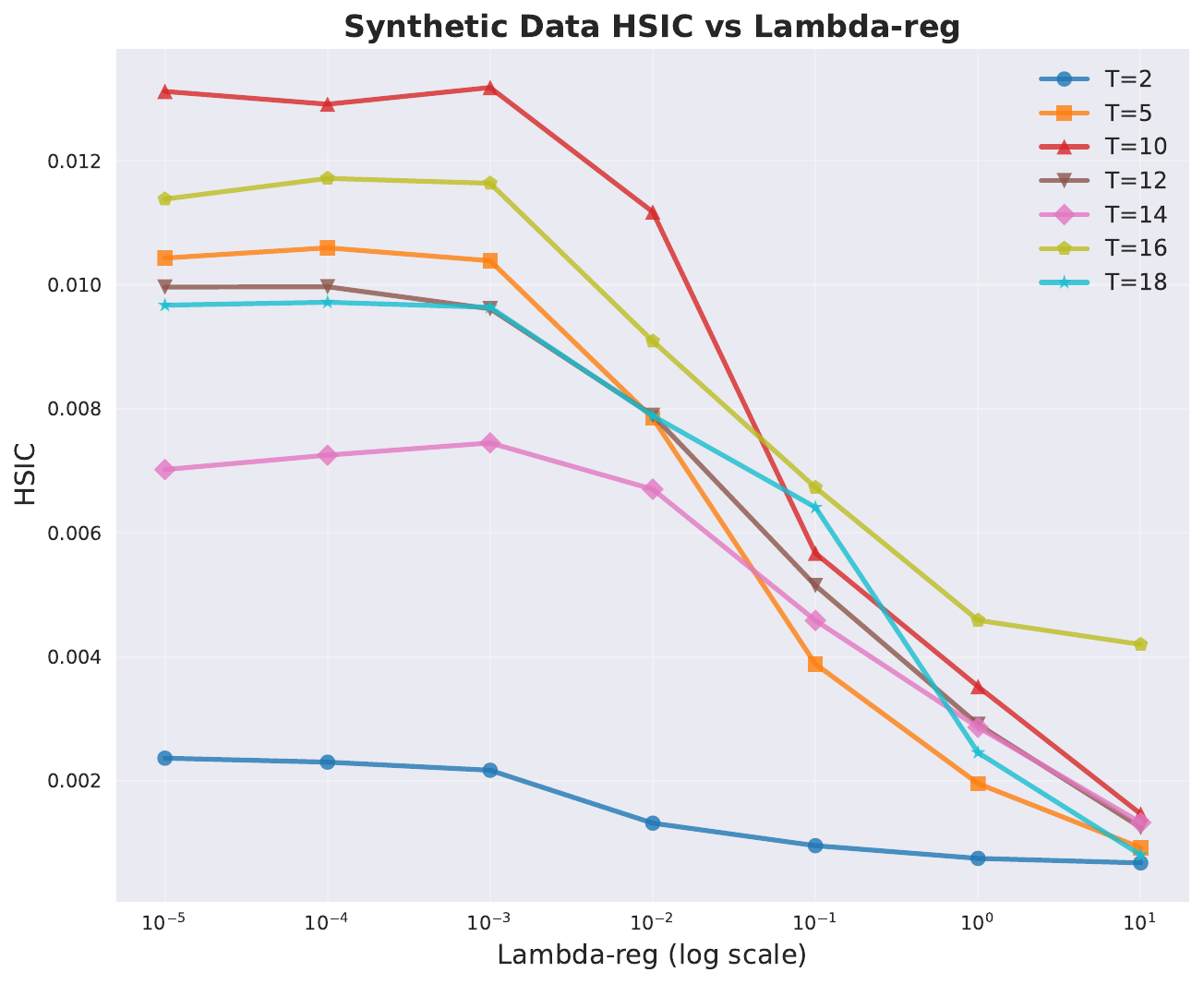}
	\caption{HSIC$(z,t)$ versus $\lambda$ as a kernel-based dependence proxy between $z=f_\phi(x)$ and the treatment $t$ (smaller indicates weaker dependence).}
	\label{fig:supp-hsic}
\end{figure}

\begin{figure}[!t]
	\centering
	\includegraphics[width=\columnwidth]{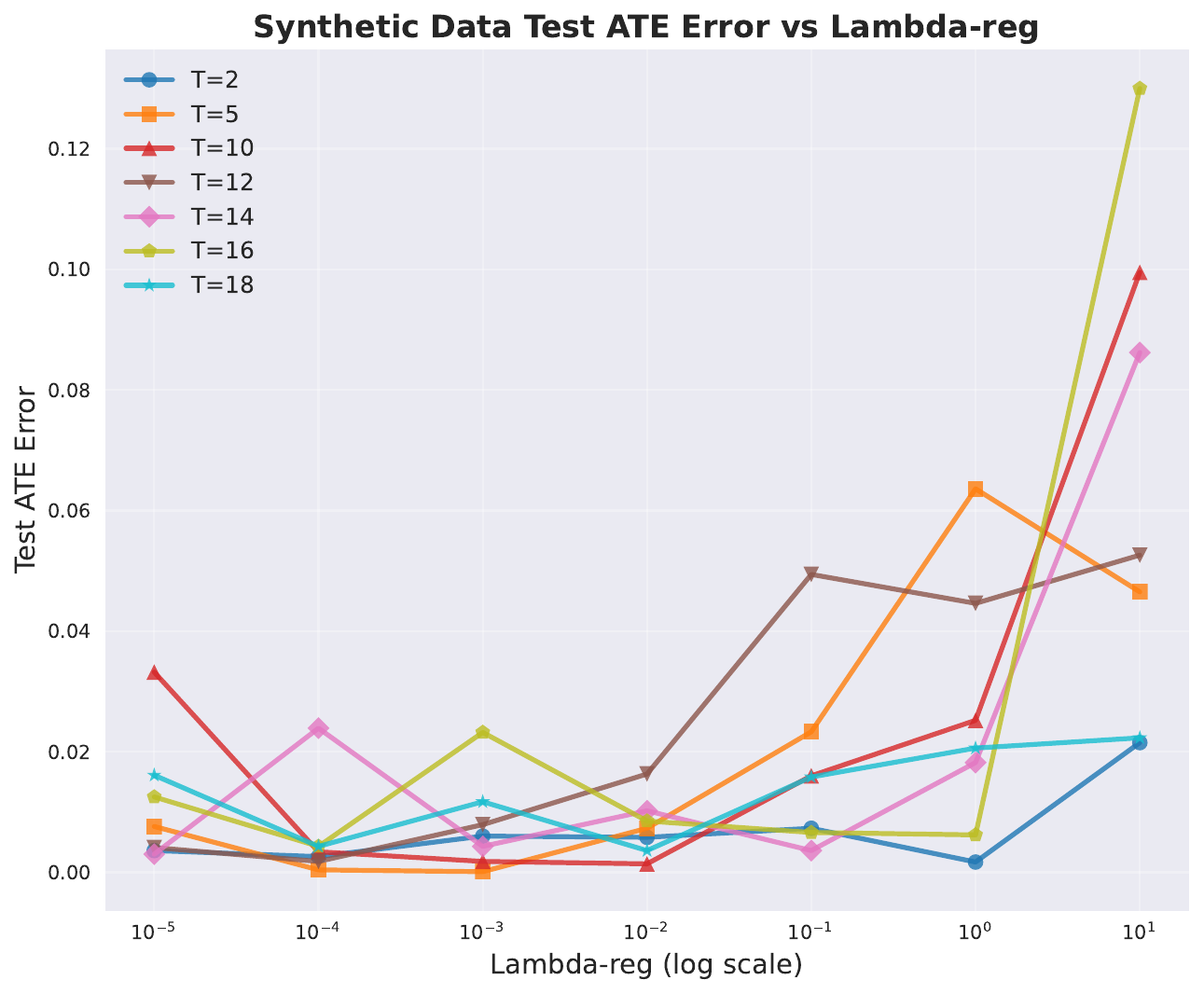}
	\caption{ATE error versus $\lambda$ on the synthetic dataset (lower is better).}
	\label{fig:supp-ate}
\end{figure}

% ---------------------- Findings (after figures) ----------------------
\noindent\textbf{Findings.}
Across representation dimensions $t_{\mathrm{dim}}$, accuracy metrics favor small–to–moderate regularization:
$\mathrm{RMSE}_y$ and PEHE typically attain minima near $\lambda\!\approx\!10^{-4}\text{--}10^{-3}$, and ATE error is smallest for $\lambda\!\in[10^{-4},10^{-2}]$.
Larger regularization ($0.1,1,10$) consistently degrades accuracy, whereas HSIC$(z,t)$ decreases (monotonically or near-monotonically) with $\lambda$, indicating weaker $z$–$t$ coupling.
For example, at $t_{\mathrm{dim}}{=}2$, $\mathrm{RMSE}_y$ improves from $0.5639$ ($10^{-5}$) to $0.5388$ ($10^{-4}$) but rises to $1.4231$ at $\lambda{=}10$; PEHE is best at $10^{-3}$ ($0.1529$) yet increases to $0.6808$ at $\lambda{=}10$; HSIC drops from $2.98\!\times\!10^{-3}$ ($10^{-5}$) to $6.81\!\times\!10^{-4}$ ($10$).
At $t_{\mathrm{dim}}{=}10$, $\mathrm{RMSE}_y$ is $0.8246$ at $10^{-4}$ but $2.9369$ at $10$, while HSIC declines from $9.79\!\times\!10^{-3}$ ($10^{-5}$) to $1.09\!\times\!10^{-3}$ ($10$).
These results validate the intended role of the regularizer: \(\lambda\) acts as a knob on the shared information between the treatment \(t\) and the learned representation \(z=f_\phi(x)\).
As \(\lambda\) increases, the empirical dependence between \(z\) and \(t\)—estimated on the test set via HSIC as a kernel‐based proxy for \(I(z,t)\) \cite{Gretton2008HSIC}—consistently decreases across treatment dimensions.
Operationally, we train the model on the training split, embed test inputs to obtain \(z\), and then compute HSIC\((z,t)\); the near‐monotone decline of HSIC with larger \(\lambda\) demonstrates that stronger regularization yields representations \(z\) that carry less information about \(t\).

% ======================= Stacked tables (one below another) =======================
\begin{table}[!h]
	\centering
	\captionsetup{skip=2pt}
	\caption{Results across $t_{\mathrm{dim}}$ and $\lambda$ (averaged over runs; standard deviations omitted for space).
		Rows are $t_{\mathrm{dim}}$; columns are $\lambda\in\{10^{-5},10^{-4},10^{-3},10^{-2},0.1,1,10\}$.}
	\label{tab:sice_ablation_stacked}
	
	\scriptsize
	\setlength{\tabcolsep}{3pt}
	\renewcommand{\arraystretch}{0.95}
	
	% ---------------- (a) RMSE_y) ----------------
	\textbf{(a) $\mathrm{RMSE}_y$}\\[-0.2ex]
	\begin{tabularx}{\linewidth}{@{}lYYYYYYY@{}}
		\toprule
		$t_{\mathrm{dim}}$ & $10^{-5}$ & $10^{-4}$ & $10^{-3}$ & $10^{-2}$ & $0.1$ & $1$ & $10$ \\
		\midrule
		2  & 0.563900 & 0.538800 & 0.567400 & 0.555800 & 0.585200 & 0.763300 & 1.423100 \\
		5  & 0.625600 & 0.666200 & 0.618400 & 0.623500 & 0.664700 & 1.165100 & 1.884000 \\
		10 & 1.414200 & 0.824600 & 0.951600 & 1.005300 & 0.863600 & 1.335200 & 2.936900 \\
		12 & 0.703200 & 0.673400 & 0.695800 & 0.744600 & 0.815000 & 1.289800 & 2.240700 \\
		14 & 0.850500 & 0.898600 & 0.882100 & 0.793700 & 0.864000 & 1.523500 & 2.668000 \\
		16 & 0.824600 & 0.810000 & 0.820300 & 0.816100 & 0.970900 & 1.562300 & 2.600400 \\
		18 & 0.774200 & 0.754100 & 0.794000 & 0.785000 & 0.788000 & 1.344500 & 2.304600 \\
		\bottomrule
	\end{tabularx}
	
	\vspace{0.6ex}
	
	% ---------------- (b) PEHE ----------------
	\textbf{(b) PEHE}\\[-0.2ex]
	\begin{tabularx}{\linewidth}{@{}lYYYYYYY@{}}
		\toprule
		$t_{\mathrm{dim}}$ & $10^{-5}$ & $10^{-4}$ & $10^{-3}$ & $10^{-2}$ & $0.1$ & $1$ & $10$ \\
		\midrule
		2  & 0.154300 & 0.159000 & 0.152900 & 0.155300 & 0.167500 & 0.250400 & 0.680800 \\
		5  & 0.276600 & 0.286900 & 0.277000 & 0.284100 & 0.349200 & 0.833200 & 1.369800 \\
		10 & 0.707000 & 0.555600 & 0.548500 & 0.561900 & 0.612500 & 0.939200 & 2.183300 \\
		12 & 0.468800 & 0.462500 & 0.468700 & 0.486200 & 0.539900 & 0.914100 & 1.862400 \\
		14 & 0.598900 & 0.639700 & 0.562600 & 0.581600 & 0.601300 & 1.096400 & 2.221100 \\
		16 & 0.703800 & 0.675100 & 0.695600 & 0.708400 & 0.730900 & 1.188400 & 2.359900 \\
		18 & 0.736200 & 0.724800 & 0.736700 & 0.737300 & 0.740400 & 1.162100 & 2.226800 \\
		\bottomrule
	\end{tabularx}
	
	\vspace{0.6ex}
	
	% ---------------- (c) ATE error ----------------
	\textbf{(c) ATE error}\\[-0.2ex]
	\begin{tabularx}{\linewidth}{@{}lYYYYYYY@{}}
		\toprule
		$t_{\mathrm{dim}}$ & $10^{-5}$ & $10^{-4}$ & $10^{-3}$ & $10^{-2}$ & $0.1$ & $1$ & $10$ \\
		\midrule
		2  & 0.003700 & 0.000100 & 0.000500 & 0.000200 & 0.000800 & 0.005100 & 0.048200 \\
		5  & 0.002200 & 0.002600 & 0.000100 & 0.000100 & 0.000900 & 0.013400 & 0.090000 \\
		10 & 0.053600 & 0.013700 & 0.019000 & 0.006000 & 0.009800 & 0.032800 & 0.203200 \\
		12 & 0.009200 & 0.009400 & 0.007200 & 0.003700 & 0.009200 & 0.047700 & 0.206200 \\
		14 & 0.021700 & 0.028800 & 0.015700 & 0.013200 & 0.014000 & 0.091000 & 0.302500 \\
		16 & 0.019700 & 0.019800 & 0.018800 & 0.013500 & 0.015600 & 0.079600 & 0.316400 \\
		18 & 0.019000 & 0.019300 & 0.016700 & 0.015000 & 0.014900 & 0.079500 & 0.303300 \\
		\bottomrule
	\end{tabularx}
	
	\vspace{0.6ex}
	
	% ---------------- (d) HSIC(z,t) ----------------
	\textbf{(d) HSIC$(z,t)$}\\[-0.2ex]
	\begin{tabularx}{\linewidth}{@{}lYYYYYYY@{}}
		\toprule
		$t_{\mathrm{dim}}$ & $10^{-5}$ & $10^{-4}$ & $10^{-3}$ & $10^{-2}$ & $0.1$ & $1$ & $10$ \\
		\midrule
		2  & 0.002981 & 0.002975 & 0.002821 & 0.002678 & 0.001845 & 0.001208 & 0.000681 \\
		5  & 0.004485 & 0.004276 & 0.003993 & 0.003615 & 0.001637 & 0.000781 & 0.000603 \\
		10 & 0.009785 & 0.009740 & 0.008795 & 0.007895 & 0.005640 & 0.001608 & 0.001089 \\
		12 & 0.004377 & 0.004383 & 0.004090 & 0.003659 & 0.003045 & 0.001790 & 0.000969 \\
		14 & 0.008351 & 0.008332 & 0.007280 & 0.006456 & 0.004819 & 0.001721 & 0.001290 \\
		16 & 0.005608 & 0.005535 & 0.004826 & 0.004629 & 0.003576 & 0.002113 & 0.003355 \\
		18 & 0.006674 & 0.006705 & 0.006611 & 0.005721 & 0.005841 & 0.002936 & 0.005055 \\
		\bottomrule
	\end{tabularx}
	
\end{table}

% 在离开本节前强制清空本节浮动体，避免漂到下一节
\FloatBarrier

\end{document}